\newtheorem{theorem}{Theorem}
\newtheorem*{theorem*}{Theorem}
\newtheorem{lemma}{Lemma}
\newcommand{\sgn}[1]{\text{sgn}\left(#1\right)}
\newtheorem{definition}{Definition}
\newcommand{\diag}{\text{diag}}
\newcommand{\bfJ}{\textbf{J}}
\newcommand{\bfG}{\textbf{G}}
\newcommand{\bfP}{\textbf{P}}
\newcommand{\cS}{\mathcal{S}}
\newcommand{\Tr}{\tilde{r}}
\newcommand{\ty}{\tilde{y}}
\title{Distillation $\approx$ Early Stopping?\\
Harvesting Dark Knowledge Utilizing Anisotropic Information Retrieval For \\
Overparameterized Neural Network}
\author{
 Bin Dong\\
Beijing International Center for Mathematical Research, Peking University\\
Center for Data Science, Peking University\\
Beijing Institute of Big Data Research\\
Beijing, China\\
\texttt{dongbin@math.pku.edu.cn}\\
 \And
  Jikai Hou\\
  School of Mathematical Sciences\\
  Peking University\\
  Beijing, China \\
  \texttt{1600010681@pku.edu.cn} \\
  \And
  Yiping Lu\\
  Institute for Computational
\& Mathematical Engineering\\ Stanford University\\
  \texttt{yplu@stanford.edu} \\
  %% examples of more authors
 \And
  Zhihua Zhang \\
  School of Mathematical Sciences\\
Peking University\\
Beijing, China\\
\texttt{zhzhang@math.pku.edu.cn}
}
\begin{document}

\maketitle

\begin{abstract}
Distillation is a method to transfer knowledge from one model to another and often achieves higher accuracy with the same capacity. In this paper, we aim to provide a theoretical understanding on what mainly helps with the distillation. Our answer is "early stopping". Assuming that the teacher network is overparameterized, we argue that the teacher network is essentially harvesting dark knowledge from the data via early stopping. This can be justified by a new concept, {Anisotropic Information Retrieval (AIR)}, which means that the neural network tends to fit the informative information first and the non-informative information (including noise) later. Motivated by the recent  development on theoretically analyzing overparameterized neural networks, we can characterize AIR by the eigenspace of the Neural Tangent Kernel(NTK). AIR facilities a new understanding of distillation. With that, we further utilize distillation to refine noisy labels. We propose a self-distillation algorithm to sequentially distill knowledge from the network in the previous training epoch to avoid memorizing the wrong labels. We also demonstrate, both theoretically and empirically, that self-distillation can benefit from more than just early stopping. Theoretically, we prove convergence of the proposed algorithm to the ground truth labels for randomly initialized overparameterized neural networks in terms of $\ell_2$ distance, while the previous result was on convergence in $0$-$1$ loss. The theoretical result ensures the learned neural network enjoy a margin on the training data which leads to better generalization. Empirically, we achieve better testing accuracy and entirely avoid early stopping which makes the algorithm more user-friendly.
\end{abstract}
\section{Introduction}

Deep learning achieves state-of-the-art results in many tasks in computer vision and natural language processing \cite{lecun2015deep}. Among these tasks, image classification is considered as one of the fundamental tasks since classification networks are commonly used as base networks for other problems. In order to achieve higher accuracy using a network with similar complexity as the base network, distillation has been proposed, which aims to utilize the prediction of one (\emph{teacher}) network to guide the training of another (\emph{student}) network. In \cite{hinton2015distilling}, the authors suggested to generate a soft target by a heavy-duty teacher network to guide the training of a light-weighted student network. More interestingly, \cite{furlanello2018born,bagherinezhad2018label} proposed to train a student network parameterized identically as the teacher network. Surprisingly, the student network significantly outperforms the teacher network. Later, it was suggested by \cite{zagoruyko2016paying,huang2017like,czarnecki2017sobolev} to transfer knowledge of representations, such as attention maps and gradients of the classifier, to help with the training of the student network. In this work, we focus on the distillation utilizing the network outputs \cite{hinton2015distilling,furlanello2018born,yang2018knowledge,bagherinezhad2018label,yang2018snapshot}.

To explain the effectiveness of distillation, \cite{hinton2015distilling} suggested that instead of the hard labels (\emph{i.e} one-hot vectors), the soft labels generated by the pre-trained teacher network provide extra information, which is called the "Dark Knowledge". The "Dark knowledge" is the knowledge encoded by the relative probabilities of the incorrect outputs. In \cite{hinton2015distilling,furlanello2018born,yang2018knowledge}, the authors pointed out that secondary information, \emph{i.e} the semantic similarity between different classes, is part of the "Dark Knowledge", and \cite{bagherinezhad2018label} observed that the "Dark Knowledge" can help to refine noisy labels. In this paper, we would like to answer the following question: \textbf{can we theoretically explain how neural networks learn the Dark Knowledge?} Answering this question will help us to understand the regularization effect of distillation.

In this work, we assume that the teacher network is overparameterized, which means that it can memorize all the labels via gradient descent training \cite{du2018gradient,du2018gradient2,oymak2018overparameterized,allen2018convergence}. In this case, if we train the overparameterized teacher network until convergence, the network's output coincides exactly with the ground truth hard labels. This is because the logits corresponding to the incorrect classes are all zero, and hence no "Dark knowledge" can be extracted. Thus, we claim that \textbf{the core factor that enables an overparameterized network to learn "Dark knowledge" is early stopping}. 

What's more, \cite{arpit2017closer,rahaman2018spectral,xu2019frequency} observed that "Dark knowledge" represents the discrepancy of convergence speed of different types of information during the training of the neural network. Neural network tends to fit  informative information, such as simple pattern, faster than non-informative and unwanted information such as noise. Similar phenomenon was observed in the inverse scale space theory for image restoration \cite{scherzer2001inverse,burger2006nonlinear,xu2007iterative,shi2008nonlinear}. In our paper, we call this effect {Anisotropic Information Retrieval} (AIR).

With the aforementioned interpretation of distillation, We further utilize AIR to refine noisy labels by introducing a new self-distillation algorithm. To extract anisotropic information, we sequentially extract knowledge from the output of the network in the previous epoch to supervise the training in the next epoch. By dynamically adjusting the strength of the supervision, we can theoretically prove that the proposed self-distillation algorithm can recover the correct labels, and empirically the algorithm achieves the state-of-the-art results on Fashion MNIST and CIFAR10. The benefit brought by our theoretical study is twofold. Firstly, the existing approach using large networks \cite{Li2019GradientDW,zhang2018generalized} often requires a validation set to early terminate the network training. However, our analysis shows that our algorithm can sustain long training without overfitting the noise which makes the proposed algorithm more user-friendly. Secondly, our analysis is based on an $\ell_2$-loss of the clean labels which enables the algorithm to generate a trained network with a bigger margin and hence generalize better.

% Surprisingly, the peak test accuracy surpasses previous state-of-the-art methods that used small networks (see Fig.\ref{fig:CIFAR}). This striking phenomenon suggests that upscaling the size of the network is essential even under the setting of noisy labels, which inspired us to analyze neural networks with infinite width. 
\subsection{Contributions}
We summarize our contributions as follows

\begin{itemize}
    \item This paper aims to understand distillation theoretically (\emph{i.e.} understand the regularization effect of distillation). Distillation works due to the soft targets generated by the teacher network. Based on the observation that the overparameterized network can exactly fit the one-hot labels which contain no dark knowledge, we theoretically justify that early stopping is essential for an overparameterized teacher network to extract dark knowledge from the hard labels. This provides a new understanding of the regularization effect of distillation.
    
    \item This is the first attempt to theoretically understand the role of distillation in noisy label refinery using overparameterized neural networks. Inspired by \cite{Li2019GradientDW}, we utilize distillation to propose a self-distillation algorithm to train a neural network under label corruption. The algorithm is theoretically guaranteed to recover the unknown correct labels in terms of the $\ell_2$-loss rather than the previous $0$-$1$-loss. This enables the algorithm to generate a trained network whose output has a bigger margin and hence generalizes better. Furthermore, our algorithm does not need a validation set to early stop during training, which makes it more hyperparameter friendly. The theoretical understanding of the overparameterized networks encourages us to use large models which empirically produce better results. 
    
 \end{itemize}

\section{Distillation $\approx$ Early Stopping?}

\subsection{No Early Stopping, No Dark Knowledge}

As mentioned in the introduction, an overparameterized teacher network is able to extract dark knowledge from the on-hot hard labels because of early stopping. In this section we present an experiment to verify this effect of early stopping, where we use a big model as a teacher to teach a smaller model as the student.

In this experiment, we train a WRN-28 \cite{zagoruyko2016wide} on CIFAR100 \cite{krizhevsky2009learning} as the teacher model, and a 5-layers CNN as the student. The experimental details are in the supplementary material. The teacher model was trained by 40, 80, 120, 160 and 200 epochs respectively. The results of knowledge distillation by the student model is shown in Fig. \ref{fig:earlystop}.
\begin{figure}
\centering     %%% not \center
\subfigure{\label{fig:a}\includegraphics[width=3.2in]{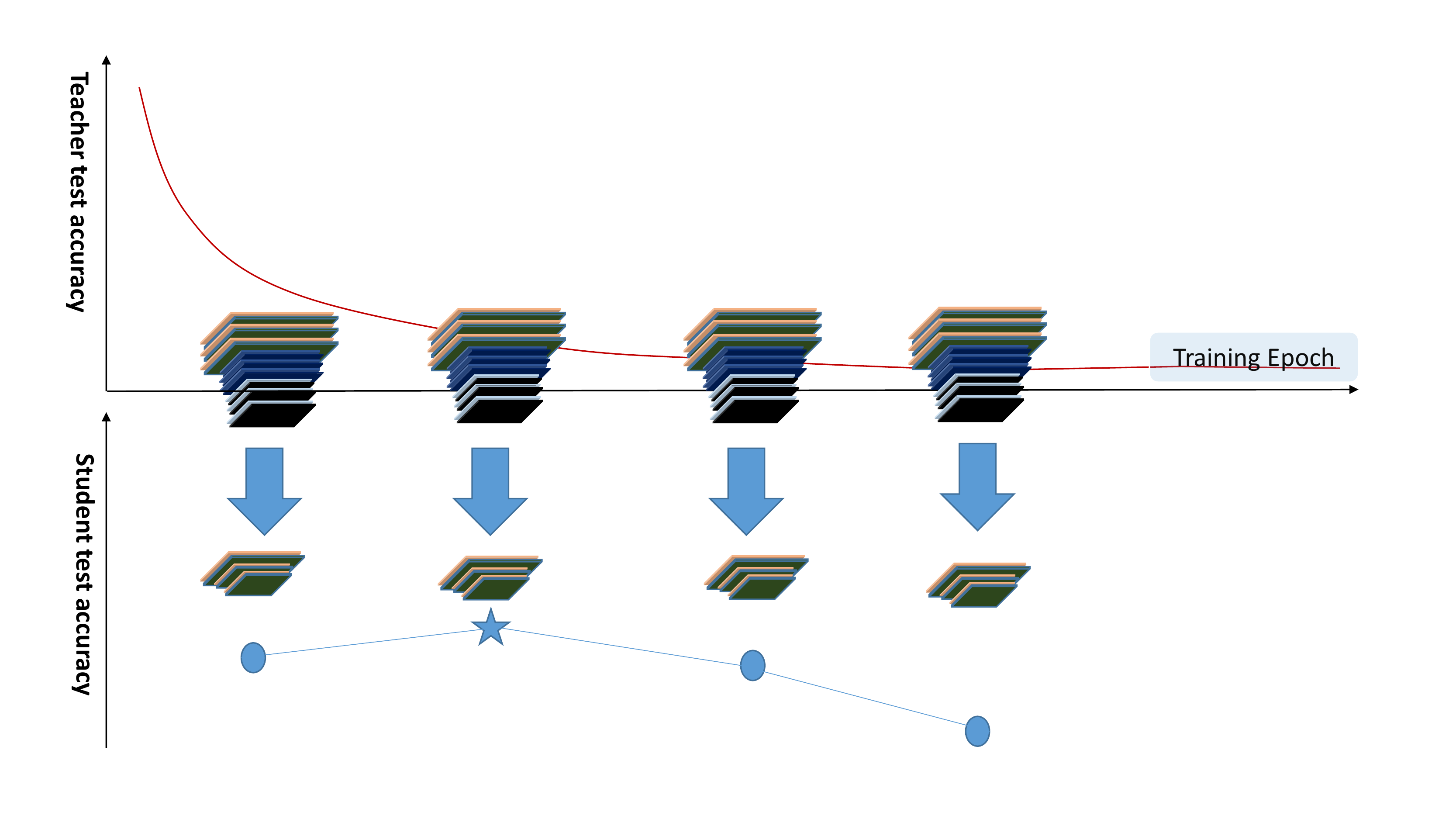}}
\subfigure{\label{fig:b}\includegraphics[width=2.5in]{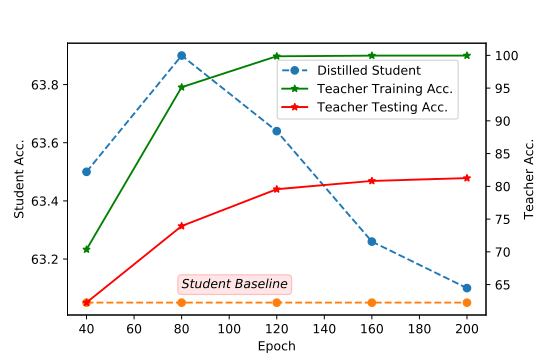}}
\caption{A good teacher may not be able to produce a good student. To maximize the effectiveness of distillation, you should early stop your epoch at a proper time.}
 \label{fig:earlystop}
\end{figure}

As we can see, the teacher model does not suffer from overfitting during the training, while it does not always educate a good student model either. As suggested by \cite{yang2018knowledge}, a more tolerant teacher educates better students. The teacher model trained with 80 epochs produces the best result which indicates that early stopping of the bigger model can extract more informative information for distillation.

\subsection{Anisotropic Information Retrieval}

In this paper, we introduce a new concept called \textbf{Anisotropic Information Retrieval (AIR)}, which means to exploit the discrepancy of the convergence speed of different types of information during the training of an overparameterized neural network. An important observation of AIR is that informative information tends to converge faster than non-informative and unwanted information such as noise. 

Selective bias of an iterative algorithm to approximate a function has long been discovered in different areas. For example, \cite{xu1992iterative} observed that iterative linear equation solver fits the low frequency component first, and they proposed a multigrid algorithm to exploit this property. In image processing, \cite{scherzer2001inverse,burger2006nonlinear,xu2007iterative,shi2008nonlinear} proposed inverse scale space methods that recover image features earlier in the iteration and noise comes back later. In kernel learning, early stopping of gradient descent is equivalent to the ridge regression \cite{smale2007learning,yao2007early,vito2005learning}, which means that bias from the eigenspace corresponding to the larger eigenvalues is reduced quicker by the gradient descent.

Under the neural network setting, \cite{rahaman2018spectral,xu2019frequency} observed that neural networks find low frequency patterns more easily. \cite{cicek2018saas} discovered that  noisy labels can slow down the training. \cite{arpit2017closer,rolnick2017deep} studied the memorization effects of the deep networks, and revealed that, during training, neural networks first memorize the data with clean labels and later with the wrong labels. In the next subsection, we will characterize AIR of overparametrized neural networks using the Neural Tangent Kernel \cite{jacot2018neural}.

\subsection{AIR and Neural Tangent Kernel}

In \cite{jacot2018neural}, the authors introduced the \emph{Neural Tangent Kernel} to characterize the trajectory of gradient descent algorithm learning infinitely wide neural networks. Denote $f(\theta,x)\in \mathbb{R}$ as the output of neural network with $\theta\in\mathbb{R}^n$ being the trainable parameter and $x\in\mathbb{R}^p$ the input. Consider training the neural network using the $\ell_2$-loss on the dataset $\{(x_i,y_i)\}_{i=1}^n\in\mathbb{R}^p\times\mathbb{R}$:
$$
\ell(\theta)=\frac{1}{2}\sum_{i=1}^n (f(\theta,x_i)-y_i)^2.
$$

We use the gradient descent $\theta_{t+1}=\theta_{t}-\eta \nabla l(\theta_{t})$ to train the neural network. Let $u_t=(f(\theta_t,x_i))_{i\in [n]}\in \mathbb{R}^n$ be the network outputs on all the data $\{x_i\}$ at iteration $t$ and $y=(y_i)_{i\in [n]}$. It was shown by \cite{du2018gradient,oymak2018overparameterized,Li2019GradientDW} that the evolution of the error $u_t-y$ can be formulated in a quasi-linear form,
$$
(u_t-y)=(I-\eta\hat H_t)\cdot(u_t-y),
$$
where $$\hat H_t=\left(\left<\int_0^1 \frac{\partial f(\theta_t+\alpha(\theta_{t+1}-\theta_t),x_i)}{\partial\theta} \mathrm{d}\alpha,\frac{\partial f(\theta_t,x_j)}{\partial\theta}\right>\right)_{i,j}.$$ It is known that $\hat H_t-H_t=o(1)$ with respect to $d$, where $H_t=\left(\left< \frac{\partial f(\theta_t,x_i)}{\partial\theta},\frac{\partial f(\theta_t,x_j)}{\partial\theta}\right>\right)_{i,j}$ is an $n\times n$ positive semi-definite Gram matrix and $d$ is the width of the neural network \cite{oymak2018overparameterized,Li2019GradientDW}. This means that $\hat H_t=H_t$ when the neural network is infinitely wide. It was further shown by \cite{oymak2018overparameterized,Li2019GradientDW,du2018gradient2} that when the neural network is infinitely wide, the Gram matrix is static, i.e. $H_t=H^*$. The static Gram matrix $H^*$ is referred to as the \emph{Neural Tangent Kernel}(NTK)\cite{jacot2018neural}. 

Note that $H^*$ is a symmetric positive semi-definite matrix. Assume that $\lambda_1>\cdots>\lambda_n\ge0$ are its $n$ eigenvalues and $e_1,e_2,\cdots,e_n$ are the corresponding eigenvectors. The eigenvectors are orthogonal $<e_i,e_j>=0$ and $H^*=\sum_{i=1}^n\lambda_i e_ie_i^T$. Consider the evolution of the projection of the loss function in different eigenspaces
$$
\left<(u_t-y),e_i\right>=\left<(I-\eta H^*)(u_t-y),e_i\right>
=\left<(u_t-y),(I-\eta H^*)e_i\right>=(1-\eta\lambda_i)\left<(u_t-y),e_i\right>.
$$
We can see that the component lies in the eigenspace with a larger eigenvalue converges faster. Therefore, AIR describes the phenomenon that the gradient descent algorithm searches for information components corresponding to different eigenspaces at different rates. \cite{rahaman2018spectral,arpit2017closer,arora2019fine,zhang2018generalized} has shown that that one possible reason of neural network’s good generalization property is that neural network fits useful information faster. Thus in our paper, we regard informative information as the eigenspaces associated with the largest few eigenvalues of NTK. 
\begin{comment}
\begin{wrapfigure}{r}{3in}
	\includegraphics[width=3in]{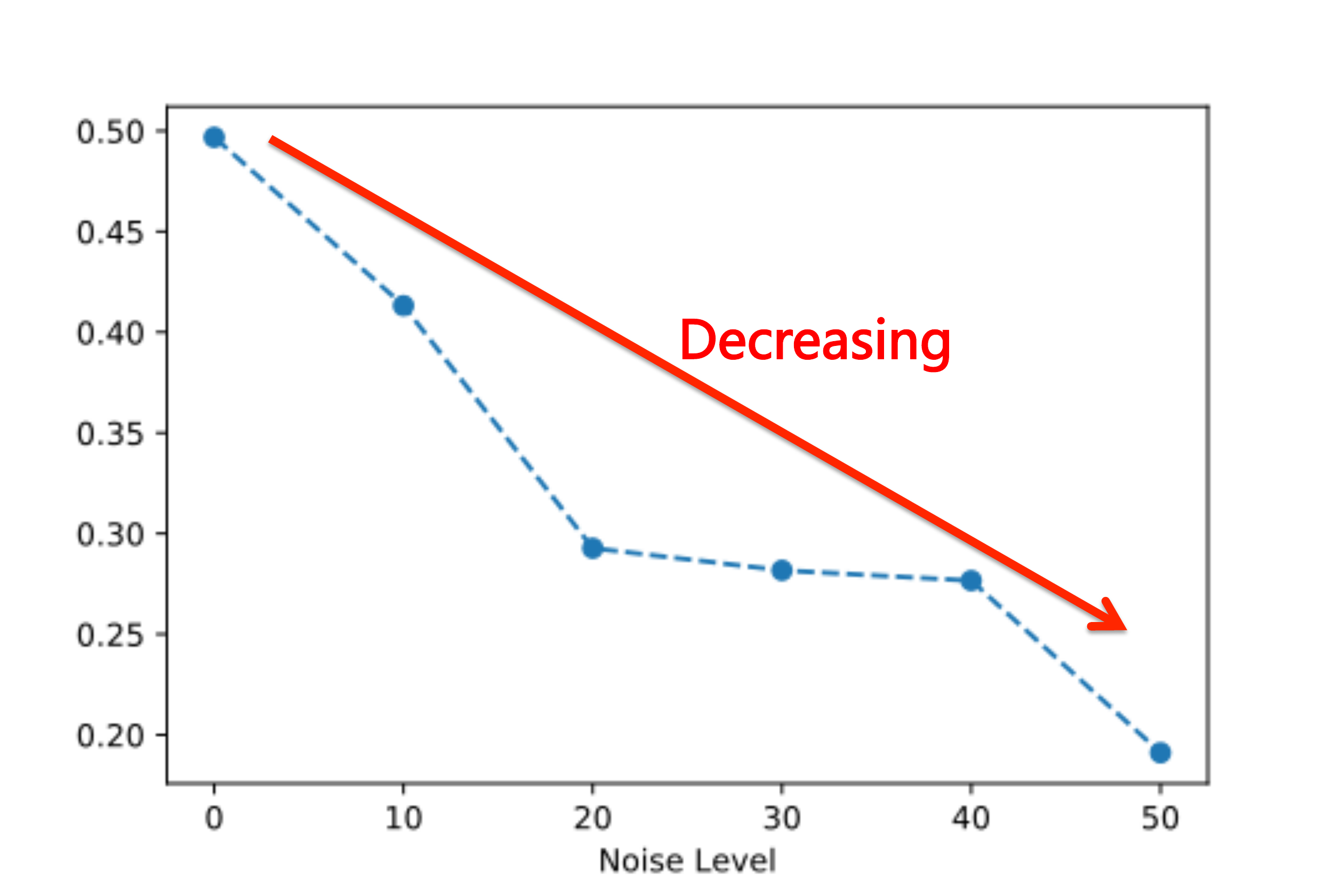}
	\caption{Components of label noise in the largest five eigensapces of NTK are decreasing.}
	\label{fig:noise}
\end{wrapfigure}
\end{comment}
\begin{figure}[!h]
    \centering
    \includegraphics[width=3in]{}
    \caption{Components of label noise in the largest five eigensapces of NTK are decreasing.}
    \label{fig:noise}
\end{figure}

We denote the projection of supervision signal to the eigenspace with a larger eigenvalue as useful information. In Figure \ref{fig:noise}, We calculate the ratio of the172norm of the label vector provided by the self-distillation algorithm lies in the top-5 eigenspace as a representative of informative information. We can see that the informative information decreases when the noise level increases. This motivates us to further explore how "Dark Knowledge" helps with label refinery.

\section{Noisy Label Refinery}

Supervised learning requires high quality labels. However, due to noisy crowd-sourcing platforms \cite{khetan2017learning} and data augmentation pipeline \cite{bagherinezhad2018label}, it is hard to acquire entirely clean labels for training. On the other hand, successive deep models often have huge capacities with millions or even billions of parameters. Such huge capacity enables the network to memorize all the labels, right or wrong \cite{zhang2016understanding}, which makes learning deep neural networks with noisy labels a challenging task. \cite{arpit2017closer,rolnick2017deep,han2018co}  pointed out that neural networks often fit the clean labels before the noisy ones during training. Recently, \cite{Li2019GradientDW} theoretically showed that early stopping can clean up label noise with overparametrized neural networks.   This, together with our understanding of distillation with AIR, inspired us to use distillation for noisy label refinery. We shall introduce a new self-distillation algorithm with theoretically guaranteed recovery of the clean labels under suitable assumptions.

\subsection{Related Works}

Training deep models on datasets with label corruption is an important and challenging problem that has attracted much attention lately. In \cite{ma2018dimensionality}, the authors proposed to regularize the Local Intrinsic Dimensionality of deep representations to detect label noise. \cite{tanaka2018joint} proposed a joint optimization framework to simultaneously optimize the network parameters and output labels. \cite{zhang2018generalized} introduced a loss function generalizing the cross entropy for robust learning. The approach that is most relevant to ours is called learning with a mentor. For example, \cite{jiang2017mentornet} used a mentor network to learn the curriculum for the student network. \cite{han2018co,yu2019does} introduced two networks that can teach each other to reject wrong labels. \cite{hu2019understanding} designed a new regularizer to restrict every weight vector to be close to its initialization for all iterations thus enforcing the same regularization effect as early stopping.

In the literature of distillation, \cite{bagherinezhad2018label} first utilized distillation to refine noisy labels of ImageNet, while \cite{yang2018snapshot} introduced a distillation method to complete teacher-student training in one generation. The latter is most related to our proposed self-distillation algorithm. However, the difference is that their model aims to ensemble diverse models in one training trajectory but ours aims to utilize AIR to refine noisy labels during the training.

\begin{figure}[!h]
    \centering
    \includegraphics[width=5in]{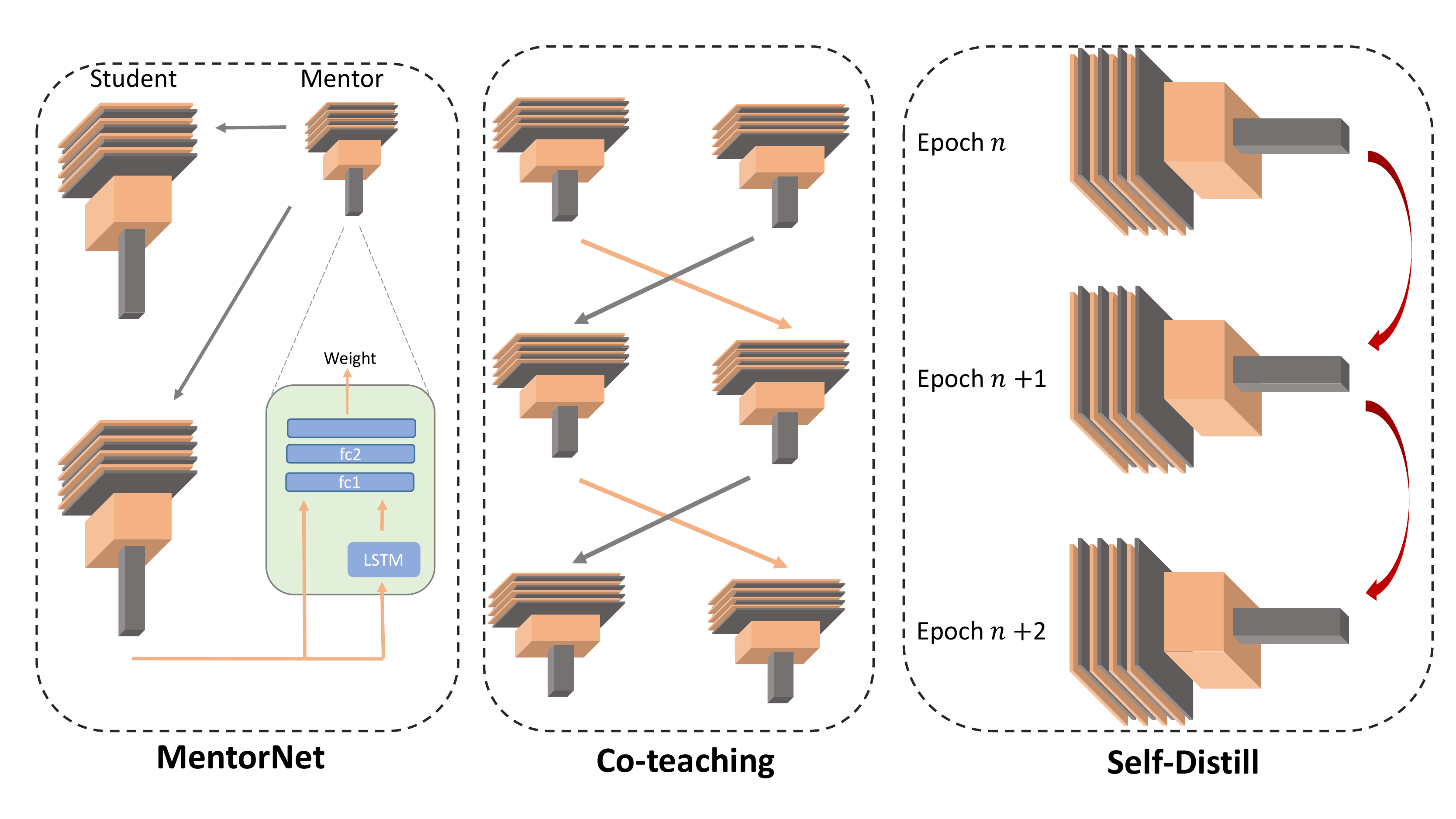}
    \caption{Comparison of error flow among MentorNet \cite{jiang2017mentornet}, Co-teaching \cite{han2018co} and our algorithm. Our algorithm does not require another teacher network and hence does not impose any additional computation burden.}
    \label{fig:pipeline}
\end{figure}

\subsection{The Self-distillation Algorithm}

It is known that the label noise lies in the eigenspaces associated to small eigenvalues \cite{arora2019fine,Li2019GradientDW}. Thus, \cite{Li2019GradientDW} used early stopping to remove label noise. However, early stopping is hard to tune and sometimes leads to unsatisfactory results. In this section, we proposed a self-distillation algorithm with an excellent empirical performance and a theoretical guarantee to recover the correct labels under certain conditions but without the requirement of early stopping. 

From the perspective of AIR, we observe that the knowledge learned in early epochs is informative information (\emph{i.e.} the eigenspaces associated with the largest few eigenvalues of NTK) and can be used to refine the training for later epochs. In other words, the algorithm distills knowledge sequentially to guide the training of the model in later epochs by the knowledge distilled by the model from earlier epochs. The informative information learned during early epochs is, in some sense, ``low frequency information", which is the core factor to enable the model to generalize well. %Therefore, we utilize the outputs of current model to help the supervision of next iteration. In this manner, our algorithm can be viewed as an adaptation of the distillation framework. 
A nice property of the self-distillation algorithm is that it generates the final model in one generation (\emph{i.e.} single-round training), which has almost no additional computational cost compared to normal training. The proposed self-distillation algorithm is given by Algorithm \ref{alg:A}. 
%Another benefit of self-distillation is that AIR provides a soft label encoding uncertainty resides in the data, while previous work directly rejects (turns down the weight of) noisy data\cite{han2018co,jiang2017mentornet} which doesn't fully utilize the information brought by the dataset. 

\begin{algorithm}
\caption{Self-Distillation}
\label{alg:A}
\begin{algorithmic}
\STATE {Randomly initialize the network. $t=0$} 
\REPEAT 
    \STATE Fetch data ${(x_1,y_1),\cdots,(x_n,y_n)}$ from training set.
    \STATE Set the label $\hat y_{i,t} = \alpha_t y_i +(1-\alpha_t)h(\mathcal{N}(x_i, \omega_t))$
    \STATE Detach $\hat y_i$ from the computational graph
    \STATE Update $\omega_{t+1}= \omega_t - \eta\sum_{i=1}^n \nabla_{\omega} l(\mathcal{N}(x_i, \omega_t),\hat y_{t,i})$.
    \STATE $t=t+1$
\UNTIL{training converged}
\end{algorithmic}
\end{algorithm}

Here, the function $h(\cdot)$ in the algorithm is the label function. It can either be a hard label function such as \emph{hardmax}, or a soft label function such as \emph{softmax} with a certain temperature. The choice of $h(\cdot)$ and interpolation coefficient $\alpha_t$ depends on the usage of the self-distillation algorithm. If we want to clean up label noise, we normally choose $h(\cdot)$ to be \emph{hardmax} or \emph{softmax} with a low temperature. The weight $\alpha_t$ is chosen to be adaptively decreasing corresponding to the increase of our confidence on the learned model at current epoch. The introduction of $h(\cdot)$ helps to boost AIR and the information gained from the previous epoch. 

%In our theoretical result, we will show that the boosting of AIR helps to achieve better convergence and generalization.

\subsection{Theoretical Foundation of Self-Distillation}

In this section, we provide a theoretical justification of the performance of the self-distillation algorithm with overparameterized neural networks. Here, we only consider binary classification task with label $\in \{-1,+1\}$.

\begin{definition} (\textbf{Noisy Clusterable Dataset Descriptions}\cite{Li2019GradientDW})
\begin{itemize}
\item We consider a dataset with $n$ data: $\{(x_i, y_i,\tilde{y_i})\}^n_{i=1}\in \mathbb{R}^d \times \{-1,+1\}\times \{-1,+1\}$. $(x_i, y_i)$ are the input data  and its associated label seen by the model while $\tilde{y_i}$ is the unobserved ground truth label. The pair $(x_i,y_i)$ with $y_i=\tilde{y}_i$ is called a clean data, otherwise it is called a corrupted data. 

\item We assume that $\{x_i\}_{i\in[n]}$ contains points with unit Euclidean norm and has $K$ clusters. Let $n_l$ be the number of points in the $l$th cluster. Assume that number of data in each cluster is balanced in the sense that $n_l\geq c_{low}\frac{n}{K}$ for constant $ c_{low}>0$. 

\item For each of the $K$ clusters, we assume that all the input data lie within the Euclidean ball $\mathcal{B}(c_l,\epsilon)$, where $c_l$ is the center with unit Euclidean norm and $\epsilon>0$ is the radius. 

\item Assume that the data in the same cluster has the same ground truth label $\tilde{y}$. For the $l$th cluster, we denote $\rho_l$ the proportion of the data with wrong labels. Let $\rho=\max\{\rho_i: i\in[K]\}$ and assume that $\rho<\frac{1}{2}$. 
\item A dataset satisfying the above assumptions is called an $(\epsilon,\rho)$ dataset.
\end{itemize}
\end{definition}

The above definition of dataset follows that of the previous work \cite{Li2019GradientDW,Li2018LearningON}. It is reasonable to assume that $\rho<\frac{1}{2}$ in order to ensure the correct labels dominate each cluster. In this work, we consider two-layers neural networks. For input data $x\in \mathbb{R}^d$, the output of the neural network $f$ is:
\begin{equation*}
    f(W,x)=v^T\phi(Wx),
\end{equation*}
where $W\in\mathbb{R}^{k\times d}$ is the weight matrix and $\phi $ is the activation function applied to $Wx$ entry-wise. We suppose $k$ is even and fix the output layer by assigning half of the entries $\frac{1}{\sqrt{k}}$ and the others $-\frac{1}{\sqrt{k}}$. Given a data matrix $X=[x_1,x_2,\dots,x_n]^T$, we simply denote the output vector on the data matrix as $$f(W,X)=[v^T\phi(WX^T)]^T=(f(W,x_1),f(W,x_2),\dots,f(W,x_n))^T\in \mathbb{R}.$$ Following the previous work on training overparameterized neural network \cite{du2018gradient}, we consider the MSE loss
$$\mathcal{L}(W,X)=\frac{1}{2}\Vert f(W,X)-y\Vert_2^2$$.

\begin{definition}
For a data matrix $D \in \mathbb{R}^{m\times d}$, we denote $\lambda(D)$ the small eigenvalue of the neural network covariance matrix $$\Sigma(D)=(DD^T)\odot \mathbb{E}_{g\sim \mathcal{N}(0,I_d)}[\phi'(Dg)\phi'(Dg)^T].
$$
%{\color{red} Change the jacobian definition to N-Layer Neural Network}
\end{definition}
The above definition reveals the matching score of the model and data. We denote $C=[c_1,c_2,\dots,c_K]^T$ the matrix composed by the center of the cluster. We denote $\Lambda=\min(\lambda(C),\lambda(X))$ for simplification.
%With the preparations above, we are ready to present our result. We choose $h(\cdot)$ a Lipschitz approximation of the hard-label function $sgn(\cdot)$. We proof that the output of self-distillation algorithm will converge to the ground true label under certain circumstance.

We are now ready to present our main theorem that establishes the convergence of the proposed self-distillation algorithm to the \textit{ground truth labels} under certain conditions.

\begin{theorem}\label{thm:A}
Assume that $\vert\phi(0)\vert$, $\vert\phi'(\cdot)\vert$ and $\vert\phi''(\cdot)\vert$ are bounded with upper bound $\Gamma\geq 1$. We fix a learning rate $\eta=\frac{1}{2\Gamma^2n}$ for the gradient descent. Assume that the sequence $\alpha_t$ monotonically decreases to $0$. Furthermore, we have two slow-decay conditions on $\alpha_t$
\begin{itemize}
    \item $\max\limits_{t< T_2} 2\sqrt{n}(\alpha_t-\alpha_{t+1})\leq \frac{c_{low}\lambda(C)}{512\Gamma^2 K}(1-2\rho)$,\;\;\;$\max\limits_{s\geq T_2} 2\sqrt{n}(\alpha_s-\alpha_{s+1})\leq \frac{c_{low}\Lambda}{512\Gamma^2 K}(1-2\rho)$,
    \item $\alpha_{T_1}\geq\max(1-\frac{c_{low}\lambda(C)}{128\Gamma^2 K}(1-2\rho),\frac{\frac{7}{4}-\frac{3}{2}\rho}{2-2\rho})$,
\end{itemize}
where $T_1= \lceil
\frac{80\Gamma^2K}{c_{low}\lambda(C)}\log( \frac{\Gamma\sqrt{32n\log\frac{8}{\delta}}}{1-2\rho} )\rceil $ and $T_2=\inf\{t:\alpha_t<\frac{1}{24\sqrt{n}}\}$. We choose the following label function $h(\cdot)$
$$h(x)=\left\{
\begin{aligned}
\frac{4}{1-2\rho}x, &\quad& \vert x\vert\leq \frac{1-2\rho}{4}, \\
\sgn x,             &\quad& \vert x\vert> \frac{1-2\rho}{4}.
\end{aligned}
\right.$$
For the self-distillation algorithm, if the following two conditions for the radius $\epsilon$ and the width $k$ are satisfied
\begin{equation*}
    \epsilon=O\left( \frac{(1-2\rho)^2}{\sqrt{nd}T_2\log\frac{1}{\delta}}  \right),\;\;\;
    k=\Omega\left(\max\left\{\frac{K^3}{c^3_{low}\Lambda^3}\log\frac{1}{\delta},\frac{nT_2K}{c_{low}\Lambda},\frac{n^3T_2^4}{(1-2\rho)^2}\log \frac{1}{\delta},\frac{n}{\Lambda}\log\frac{n}{\delta}\right\}\right),
    \end{equation*}

then for random initialization $W_0\sim\mathcal{N}(0,1)^{k\times d}$, with probability $1-\delta$, we have:
\begin{equation*}
    \lim\limits_{t\rightarrow \infty}\Vert f(W_t,X)-\tilde{y}\Vert_2=0,
\end{equation*}
where $W_t$ is the parameter generated by the full-batch self-distillation algorithm at iteration $t$.
\end{theorem}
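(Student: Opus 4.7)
The plan is to divide the trajectory into three phases tied to the two time-thresholds $T_1$ and $T_2$, and analyse each under the NTK linearisation. Throughout, I would keep the parameters in a neighbourhood of $W_0$ where the Jacobian $J(W)=\partial f(W,X)/\partial W$ satisfies $J(W)J(W)^\top \approx H^* = \Sigma(X)$ up to spectral perturbation $o(\Lambda)$; by the Hessian bound $|\phi''|\le \Gamma$ this holds as long as $\|W_t-W_0\|_F$ is controlled, which is arranged by the width lower bound $k=\Omega(n/\Lambda \cdot \log(n/\delta))$ together with the other conditions. A second preparatory step is to relate $\Sigma(X)$ to $\Sigma(C)$: because $\|x_i-c_{\ell(i)}\|\le\epsilon$, the two kernels differ by $O(\epsilon\sqrt{nd})$, so the top-$K$ eigenspace of $\Sigma(X)$ is well-approximated by the cluster-indicator subspace, and for vectors in that subspace the dynamics is governed by $\lambda(C)$.

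Phase~1 ($t<T_1$): here $\alpha_t\approx 1$, so the target is essentially the corrupted label $y$. I would decompose $y = \tilde y + (y-\tilde y)$ and further decompose both pieces onto the cluster-indicator (``signal'') subspace and its orthogonal complement. Within any cluster $\ell$, the corruption average $\tfrac{1}{n_\ell}\sum_{i\in\ell}(y_i-\tilde y_i)$ concentrates near $-2\rho_\ell \tilde y_\ell$, bounded away from $\pm 1$ by $1-2\rho$, so the signal component of $y$ is close to $(1-2\rho)\tilde y$ on cluster centres. The NTK-linearised gradient flow contracts error in the signal subspace at rate $1-\eta\lambda(C)/K$ while contracting error in the complementary subspace at most at rate $1-\eta\lambda_{\min}$. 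Iterating for $T_1=\Theta(K/(c_{low}\lambda(C)))\cdot\log(\Gamma\sqrt{n/\delta}/(1-2\rho))$ steps, the prediction $f(W_{T_1},c_\ell)$ on every cluster centre lies in the linear band $|x|\le (1-2\rho)/4$ of $h$ and is $(1-2\rho)/4$-close to $\tilde y_\ell$. The precise tolerance is chosen so that $h(f(W_{T_1},X))$ equals the genuine label signal up to a shrinkage factor that the factor $4/(1-2\rho)$ inside $h$ is calibrated to invert.

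Phases~2 and~3 ($T_1\le t<T_2$ and $t\ge T_2$): here the plan is a discrete Gr\"onwall / inductive argument. Suppose at step $t$ the predictions are within some radius $r_t$ of $\tilde y$; then $h(f(W_t,X))=\tilde y$ on the clipped coordinates and at worst contributes an $O(r_t)$ error on the linear coordinates, so the effective target $\hat y_t$ differs from $\tilde y$ by $\alpha_t\|y-\tilde y\|+O(r_t)\le 2\sqrt n \alpha_t + O(r_t)$. One linearised gradient step then shrinks $\|f(W_{t+1},X)-\tilde y\|$ by a factor $1-\eta\Lambda$ plus an update error bounded by $2\sqrt n(\alpha_t-\alpha_{t+1})$ from the moving target and a higher-order Hessian term bounded using the radius condition on $\epsilon$ and the width condition on $k$. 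The two slow-decay hypotheses on $\alpha_t-\alpha_{t+1}$ ensure that this step-to-step drift is dominated by the contraction factor in the signal subspace (using $\lambda(C)$ while the target still contains a macroscopic $\alpha_t y$ component, and then $\Lambda$ once $\alpha_t<1/(24\sqrt n)$, which is precisely how $T_2$ is defined). Summing the perturbation series over $t\ge T_1$ and sending $\alpha_t\to 0$ gives $\|f(W_t,X)-\tilde y\|\to 0$.

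The main obstacle is the interaction between the time-varying target $\hat y_t$, the nonlinearity $h$, and the NTK perturbation analysis: standard overparameterised NTK convergence proofs (e.g.\ \cite{du2018gradient,Li2019GradientDW}) are stated for a fixed target, whereas here the target is itself a function of the current weights, so naive application would produce a fixed point that need not equal $\tilde y$. The delicate step is establishing the invariant of Phase~1 at $t=T_1$ with tolerance \emph{tight enough} that $h$ behaves as the correct amplifier but \emph{loose enough} to be reachable within $T_1$ NTK iterations, since it simultaneously pins down the logarithmic factor in $T_1$, the choice of slope $4/(1-2\rho)$ in $h$, and the width requirement $k=\Omega(n^3 T_2^4/(1-2\rho)^2)$ needed to keep the Hessian-induced drift below the margin $(1-2\rho)/4$ throughout the entire trajectory. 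Everything else reduces to bookkeeping of these constants.
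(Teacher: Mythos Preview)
Your phase decomposition and the idea of maintaining an invariant by induction after $T_1$ are the right skeleton, and they match the paper's two-stage architecture. But there is a genuine gap in your understanding of what happens at $T_1$, and it propagates through the rest of your sketch.

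You write that at time $T_1$ the prediction $f(W_{T_1},c_\ell)$ ``lies in the linear band $|x|\le (1-2\rho)/4$ of $h$ and is $(1-2\rho)/4$-close to $\tilde y_\ell$''. These two clauses are mutually inconsistent: if $f$ is within $(1-2\rho)/4$ of $\tilde y_\ell\in\{\pm 1\}$ then $|f|\ge 1-(1-2\rho)/4>(1-2\rho)/4$, so $f$ is in the \emph{clipped} region, not the linear one. Your subsequent description of $h$ as an amplifier whose slope $4/(1-2\rho)$ ``inverts a shrinkage factor'' is therefore backwards. The actual mechanism (and the reason for the threshold $(1-2\rho)/4$) is this: the projection $\bar y$ of the corrupted label vector onto the cluster-indicator subspace $\mathcal S_+$ satisfies $\bar y(i)\tilde y(i)\ge 1-2\rho$; once the projected residual $\|\bar r_t\|$ drops below $(1-2\rho)/4$ and $\alpha_t$ is still near~$1$, one gets $f(W_t,\tilde x_i)\tilde y(i)\ge (1-2\rho)/2>(1-2\rho)/4$, which forces $h(f_t)=\tilde y$ \emph{exactly}, on every sample. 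The invariant carried forward by induction for all $t\ge T_1$ is precisely $h(f_t)=\tilde y$; there are no ``linear coordinates'' left to worry about. This is what makes the moving-target term $(1-\alpha_t)\|h(f_t)-h(f_{t+1})\|$ vanish in the residual recursion after $T_1$, and without it your Gr\"onwall bound would carry an extra $L_h$-factor that the slow-decay constants are not sized for.

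There is also a structural difference worth noting. You propose to handle the $\epsilon$-clusterable data by a one-shot spectral perturbation $\Sigma(X)\approx\Sigma(C)$ and then work directly on $X$. The paper instead runs two \emph{coupled} self-distillation trajectories, one on $X$ and one on the cluster-center matrix $\tilde X$, and bounds the discrepancies $d_t=\|W_t-\tilde W_t\|_F$ and $p_t=\|f(W_t,X)-f(\tilde W_t,\tilde X)\|_2$ by a joint recursion up to $T_2$. The payoff is that the clean invariant $h(f(\tilde W_t,\tilde X))=\tilde y$ (established on the idealised data) transfers to $X$ at time $T_2$ with an explicit $(1-2\rho)/4$ margin, after which the analysis switches to $X$ alone and uses $\Lambda=\min(\lambda(C),\lambda(X))$ to drive $\|r_s\|\to 0$. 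Your direct route may be salvageable, but you would need to control the off-$\mathcal S_+$ component of $f(W_t,X)-\tilde y$ separately, and the paper's conditions on $\epsilon$ and $k$ are tailored to the coupled-trajectory bound, not to a static kernel perturbation.
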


%The first and second conditions of Theorem \ref{thm:A} guarantee that $\alpha_t$ is slow-decreasing and the training process is stable. The third condition describes the requirement on the clusters of the dataset, and the forth condition characterizes the overparameterization level of the neural network. 

\begin{wrapfigure}{r}{3.2in}
	\includegraphics[width=3.2in]{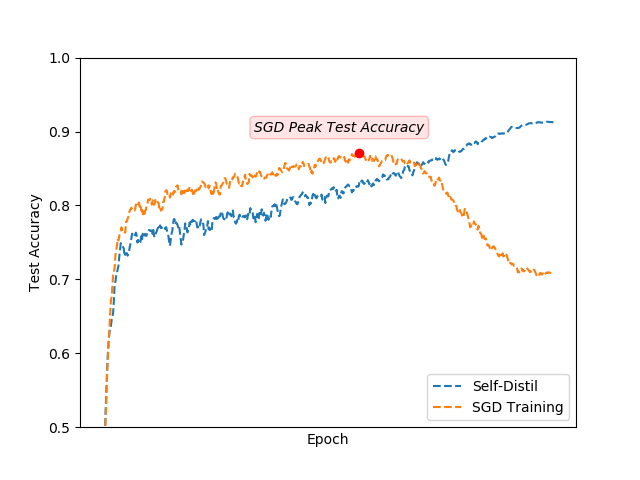}
	\caption{Training on CIFAR10 with 40\% noise injection. The normal training suffers from over-fitting, while self-distillation does not. (Note that we are conducting cosine learning rate scheduling. The learning rate is extremely small at the end of learning.)}
	\label{fig:cifar10}
\end{wrapfigure}

Compared to previous result on noisy label  \cite{Li2019GradientDW}, our results made the following improvements. Firstly, our algorithm fits the ground truth labels without the help of early stopping as long as a mild condition on $\alpha_t$ is satisfied. %Strictly speaking, we only care about two critical time instances $T_1$ and $T_2$ regarding to $\alpha_t$, and don't have any assumption about the rate that $\alpha_t$ converges to $0$. 
Secondly, while previous work only ensures the algorithm to yield correct class labels on training set, our results state the $\ell_2$ convergence of the outputs to the ground truth labels. As a result, the solution that our algorithm finds tends to have larger margin which leads to better generalization \cite{mohri2018foundations}. This will be supported by our empirical studies in the next subsection.

\subsection{Noisy Label Refinery}

In this section, we conduct experiments on the self-distillation algorithm. In the experiments, we applied our algorithm on corrupted Fashion MNIST and CIFAR10. At noise level $p$, every data in the original training dataset is chosen and assigned a symmetric noisy label with probability $p$. We test our algorithm for $p=0.2,0.4,0.6$ and $0.8$. The test accuracy is calculated with respect to the ground truth labels.

We adopted the shake-shake network of \cite{Gastaldi2017ShakeShakeR} with $32$ channels and cross entropy loss. We trained the network by momentum stochastic gradient descent (SGD) with batch size 128, momentum 0.9 and a weight decay of 1e-4. We schedule the learning rate following cosine learning rate \cite{Loshchilov2017SGDRSG} with a maximum learning rate $0.2$ and minimum learning rate $0$. In order to ensure convergence, we trained the models for $600$ epochs. Following \cite{lee2015deeply}, mean subtraction, horizontal random flip, $32\times 32$ random crops after padding with 4 pixels on each side from the padded image is performed as the data augmentation process. For testing, We only do evaluation on the original $32\times 32$ image. In self-distillation, we adaptively adjust $\alpha_t$ by setting $1-\alpha_t=\lambda*accuracy$, where $accuracy$ is the accuracy calculated on the current batch. The direct ratio $\lambda$ is the only tuning parameter. For CIFAR10, We simply set $\lambda$ as 1 when the noisy level is low, such as $p=0,0.2$ and $0.4$. When the noisy level $p=0.6$ and $0.8$, we take $\lambda=1.5$. For Fashion MNIST, We simply set $\lambda$ as $0.6$, $1$, $1$, $1.4$, $1.6$ respectively when $p=0, 0.2, 0.4, 0.6, 0.8$. We report the average final test accuracy of 3 runs for Fashion MNIST and CIFAR10 in Table \ref{tab:result}.

\begin{table}[!h]
\begin{center}

\begin{tabular}{c||ccccc|ccccc}
\hline\hline
           & \multicolumn{5}{c|}{CIFAR10}         & \multicolumn{5}{c}{Fashion MNIST}    \\ \hline\hline
Noise Rate & 0     & 0.2   & 0.4   & 0.6   & 0.8   & 0     & 0.2   & 0.4   & 0.6   & 0.8   \\ \hline
CoTeaching& 90.12 & 86.19 & 80.87 & -     & -     & 94.28 & 91.24 & 86.83 & -     & -     \\ \hline
D2L& 91.29 & 86.64 & 73.12 & -     & -     & 94.47 & 89.12 & 78.98 & -     & -     \\ \hline
WAT& 91.88 & 89.12 & 84.55 & -     & -     & 94.70 & 93.37 & 90.41 & -     & -     \\ \hline
GCE& -     & 89.7  & 87.62 & 82.70 & 67.92 & -     & 93.21 & 92.60 & 91.56 & 88.33 \\ \hline
Ours       & \textbf{94.31} & \textbf{93.75} & \textbf{90.82} & \textbf{86.72} & \textbf{73.91} & \textbf{95.21} & \textbf{94.58} & \textbf{93.78} & \textbf{92.63} & \textbf{89.21} \\ \hline
\end{tabular}
\caption{Results of Self-distillation. }
\label{tab:result}
\end{center}
\end{table}

\subsection{Distillation $\gtrapprox$ Early Stopping}
\begin{wrapfigure}{r}{3in}
	\includegraphics[width=3in]{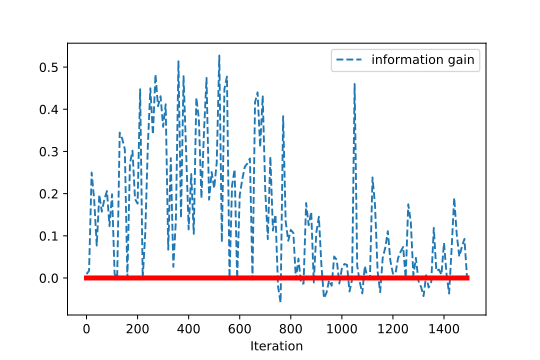}
	\caption{Self-distillation always gains information.}
	\label{fig:info}
\end{wrapfigure}

Distillation can benefit from more than just early stopping. Distillation has the ability to enhance the AIR and thus can extract more dark knowledge from the data via early stopping. For example,  \cite{hinton2015distilling} enhanced AIR by adjusting the temperature in the softmax layer. In self-distillation, the label function $h(\cdot)$ is proposed to amplify the information gained in the earlier epochs so that the knowledge gained in the earlier epochs is preserved. Thus, self-distillation does not require early stopping which makes the algorithm more user-friendly. On the other hand, the self-distillation algorithm dynamically enhances AIR which enables it to achieve $\ell_2$ convergence and thus better generalization. Again, we use the ratio of the norm of the label vector which lies in the top-5 eigenspace as a representative of informative information. We call the subtraction of informative information corresponding to the label vector provided by self-distillation algorithm and original label vector as information gain. From \ref{fig:info} we can see that the information gain is mostly larger than zero during the training of self-distillation algorithm. This phenomenon indicates that the supervision signal of self-distillation algorithm gains more information than directly using the noisy label. To sum up, a well-designed distillation algorithm can enjoy a regularization effect beyond early stopping and is able to gain more knowledge from the data.

%%%%%%%%%%%%%%%%%%%%%%%%%%%%%%%%%%%%%%%%%%%%%%%%%%
%%%%%%%%%%%%%%%%%%%%%%%%%%%%%%%%%%%%%%%%%%%%%%%%%%
%%%%%%%%%%%%%%%%%%%%%%%%%%%%%%%%%%%%%%%%%%%%%%%%%%
%%%%%%%%%%%%%%%%%%%%%%%%%%%%%%%%%%%%%%%%%%%%%%%%%%
%%%%%%%%%%%%%%%%%%%%%%%%%%%%%%%%%%%%%%%%%%%%%%%%%%
%%%%%%%%%%%%%%%%%%%%%%%%%%%%%%%%%%%%%%%%%%%%%%%%%%
%%%%%%%%%%%%%%%%%%%%%%%%%%%%%%%%%%%%%%%%%%%%%%%%%%

\section{Conclusion and Discussion}

This paper provided an understanding of distillation using overparameterized neural networks. We observed that such neural networks posses the property of Anisotropic Information Retrieval (AIR), which means the neural network tends to fit the infomrative information (\emph{i.e.} the eigenspaces associated with the largest few eigenvalues of NTK) first and the non-informative information later. Through AIR, we further observed that distillation of the Dark Knowledge is mainly due to early stopping. Based on this new understanding, we proposed a new self-distillation algorithm for noisy label refinery. Both theoretical and empirical justifications of the performance of the new algorithm were provided. 

Our analysis is based on the assumption that the teacher neural network is overparameterized. When the teacher network is not overparameterized, the network will be biased towards the label even without early stopping. It is still an interesting and unclear problem that whether the bias can provide us with more information. For label refinery, our analysis is mostly based on the symmetric noise setting. We are interested in extending our analysis to the asymmetric setting.

%\subsubsection*{Acknowledgments} Bin Dong is supported in part by Beijing Natural Science Foundation (Z180001) and NSFC grant 11831002.

\section{Acknowledgement}
Bin Dong is supported in part by Beijing Natural Science Foundation (No. Z180001) and Beijing Academy  of Artificial Intelligence (BAAI). Jikai Hou is supported by the Elite Undergraduate Training
Program of the School of Mathematical Sciences at Peking
University. Yiping Lu would like to thank Takino Yumiko(STU48), Yoneda Miina(Last Idol) and Nishimura honoka(Last Idol) for their insightful discussion during the hand shaking events.

\begin{comment}
\section{Secondary Information}
{\color{red} shan diao hai shi fang qian mian, that's a question}
\cite{hinton2015distilling,yang2018knowledge} introduces the idea that classifiers built from a \emph{softmax} function can illustrate very informative correlations in the \emph{softmax} outputs.  As an example, if the network is aimed to classify cats, dogs and hours, the logits for cat and dog before softmax will always have more correlation than cat and boat since cats look semantically similar to dogs. In the case that the teacher is overparameterized, "Dark Knowledge" an only be extracted if we early stop the neural network, which means that "Dark Knowledge" is revealed by the convergence speed difference with regard to different lab. Intuitively, the correlation between cat and dog let the output of neural network logit for dog converge slower to 0 when we aim to train the classifier to classify the data as a cat.

In this section, we formulate this effect from the neural tangent kernel. To illustrate this effect, we first extend Neural Tangent Kernel to vector output neural network. Now

\end{comment}

\bibliography{ref}

\begin{thebibliography}{10}

\bibitem{allen2018convergence}
Zeyuan Allen-Zhu, Yuanzhi Li, and Zhao Song.
\newblock A convergence theory for deep learning via over-parameterization.
\newblock {\em arXiv preprint arXiv:1811.03962}, 2018.

\bibitem{arora2018optimization}
Sanjeev Arora, Nadav Cohen, and Elad Hazan.
\newblock On the optimization of deep networks: Implicit acceleration by
  overparameterization.
\newblock {\em arXiv preprint arXiv:1802.06509}, 2018.

\bibitem{arora2019fine}
Sanjeev Arora, Simon~S Du, Wei Hu, Zhiyuan Li, and Ruosong Wang.
\newblock Fine-grained analysis of optimization and generalization for
  overparameterized two-layer neural networks.
\newblock {\em arXiv preprint arXiv:1901.08584}, 2019.

\bibitem{arpit2017closer}
Devansh Arpit, Stanis{\l}aw Jastrz{\k{e}}bski, Nicolas Ballas, David Krueger,
  Emmanuel Bengio, Maxinder~S Kanwal, Tegan Maharaj, Asja Fischer, Aaron
  Courville, Yoshua Bengio, et~al.
\newblock A closer look at memorization in deep networks.
\newblock In {\em Proceedings of the 34th International Conference on Machine
  Learning-Volume 70}, pages 233--242. JMLR. org, 2017.

\bibitem{bagherinezhad2018label}
Hessam Bagherinezhad, Maxwell Horton, Mohammad Rastegari, and Ali Farhadi.
\newblock Label refinery: Improving imagenet classification through label
  progression.
\newblock {\em arXiv preprint arXiv:1805.02641}, 2018.

\bibitem{burger2006nonlinear}
Martin Burger, Guy Gilboa, Stanley Osher, Jinjun Xu, et~al.
\newblock Nonlinear inverse scale space methods.
\newblock {\em Communications in Mathematical Sciences}, 4(1):179--212, 2006.

\bibitem{burger2005nonlinear}
Martin Burger, Stanley Osher, Jinjun Xu, and Guy Gilboa.
\newblock Nonlinear inverse scale space methods for image restoration.
\newblock In {\em International Workshop on Variational, Geometric, and Level
  Set Methods in Computer Vision}, pages 25--36. Springer, 2005.

\bibitem{cicek2018saas}
Safa Cicek, Alhussein Fawzi, and Stefano Soatto.
\newblock Saas: Speed as a supervisor for semi-supervised learning.
\newblock In {\em Proceedings of the European Conference on Computer Vision
  (ECCV)}, pages 149--163, 2018.

\bibitem{czarnecki2017sobolev}
Wojciech~M Czarnecki, Simon Osindero, Max Jaderberg, Grzegorz Swirszcz, and
  Razvan Pascanu.
\newblock Sobolev training for neural networks.
\newblock In {\em Advances in Neural Information Processing Systems}, pages
  4278--4287, 2017.

\bibitem{damodaran2019pushing}
Bharath~Bhushan Damodaran, Kilian Fatras, Sylvain Lobry, R{\'e}mi Flamary,
  Devis Tuia, and Nicolas Courty.
\newblock Pushing the right boundaries matters! wasserstein adversarial
  training for label noise.
\newblock {\em arXiv preprint arXiv:1904.03936}, 2019.

\bibitem{du2018gradient2}
Simon~S Du, Jason~D Lee, Haochuan Li, Liwei Wang, and Xiyu Zhai.
\newblock Gradient descent finds global minima of deep neural networks.
\newblock {\em arXiv preprint arXiv:1811.03804}, 2018.

\bibitem{du2018gradient}
Simon~S Du, Xiyu Zhai, Barnabas Poczos, and Aarti Singh.
\newblock Gradient descent provably optimizes over-parameterized neural
  networks.
\newblock {\em arXiv preprint arXiv:1810.02054}, 2018.

\bibitem{frankle2018lottery}
Jonathan Frankle and Michael Carbin.
\newblock The lottery ticket hypothesis: Finding sparse, trainable neural
  networks.
\newblock {\em arXiv preprint arXiv:1803.03635}, 2018.

\bibitem{furlanello2018born}
Tommaso Furlanello, Zachary~C Lipton, Michael Tschannen, Laurent Itti, and
  Anima Anandkumar.
\newblock Born again neural networks.
\newblock {\em arXiv preprint arXiv:1805.04770}, 2018.

\bibitem{Gastaldi2017ShakeShakeR}
Xavier Gastaldi.
\newblock Shake-shake regularization.
\newblock {\em CoRR}, abs/1705.07485, 2017.

\bibitem{han2018co}
Bo~Han, Quanming Yao, Xingrui Yu, Gang Niu, Miao Xu, Weihua Hu, Ivor Tsang, and
  Masashi Sugiyama.
\newblock Co-teaching: Robust training of deep neural networks with extremely
  noisy labels.
\newblock In {\em Advances in Neural Information Processing Systems}, pages
  8527--8537, 2018.

\bibitem{hinton2015distilling}
Geoffrey Hinton, Oriol Vinyals, and Jeff Dean.
\newblock Distilling the knowledge in a neural network.
\newblock {\em arXiv preprint arXiv:1503.02531}, 2015.

\bibitem{hu2019understanding}
Wei Hu, Zhiyuan Li, and Dingli Yu.
\newblock Understanding generalization of deep neural networks trained with
  noisy labels.
\newblock {\em arXiv preprint arXiv:1905.11368}, 2019.

\bibitem{huang2017like}
Zehao Huang and Naiyan Wang.
\newblock Like what you like: Knowledge distill via neuron selectivity
  transfer.
\newblock {\em arXiv preprint arXiv:1707.01219}, 2017.

\bibitem{jacot2018neural}
Arthur Jacot, Franck Gabriel, and Cl{\'e}ment Hongler.
\newblock Neural tangent kernel: Convergence and generalization in neural
  networks.
\newblock In {\em Advances in neural information processing systems}, pages
  8571--8580, 2018.

\bibitem{jiang2017mentornet}
Lu~Jiang, Zhengyuan Zhou, Thomas Leung, Li-Jia Li, and Li~Fei-Fei.
\newblock Mentornet: Learning data-driven curriculum for very deep neural
  networks on corrupted labels.
\newblock {\em arXiv preprint arXiv:1712.05055}, 2017.

\bibitem{khetan2017learning}
Ashish Khetan, Zachary~C Lipton, and Anima Anandkumar.
\newblock Learning from noisy singly-labeled data.
\newblock {\em arXiv preprint arXiv:1712.04577}, 2017.

\bibitem{krizhevsky2009learning}
Alex Krizhevsky and Geoffrey Hinton.
\newblock Learning multiple layers of features from tiny images.
\newblock Technical report, Citeseer, 2009.

\bibitem{lecun2015deep}
Yann LeCun, Yoshua Bengio, and Geoffrey Hinton.
\newblock Deep learning.
\newblock {\em nature}, 521(7553):436, 2015.

\bibitem{lee2015deeply}
Chen-Yu Lee, Saining Xie, Patrick Gallagher, Zhengyou Zhang, and Zhuowen Tu.
\newblock Deeply-supervised nets.
\newblock In {\em Artificial Intelligence and Statistics}, pages 562--570,
  2015.

\bibitem{Li2019GradientDW}
Mingchen Li, Mahdi Soltanolkotabi, and Samet Oymak.
\newblock Gradient descent with early stopping is provably robust to label
  noise for overparameterized neural networks.
\newblock {\em CoRR}, abs/1903.11680, 2019.

\bibitem{Li2018LearningON}
Yuanzhi Li and Yingyu Liang.
\newblock Learning overparameterized neural networks via stochastic gradient
  descent on structured data.
\newblock In {\em NeurIPS}, 2018.

\bibitem{Loshchilov2017SGDRSG}
Ilya Loshchilov and Frank Hutter.
\newblock Sgdr: Stochastic gradient descent with warm restarts.
\newblock In {\em ICLR}, 2017.

\bibitem{ma2019priori}
Chao Ma, Qingcan Wang, et~al.
\newblock A priori estimates of the population risk for residual networks.
\newblock {\em arXiv preprint arXiv:1903.02154}, 2019.

\bibitem{ma2018fluid}
Pingchuan Ma, Yunsheng Tian, Zherong Pan, Bo~Ren, and Dinesh Manocha.
\newblock Fluid directed rigid body control using deep reinforcement learning.
\newblock {\em ACM Transactions on Graphics (TOG)}, 37(4):96, 2018.

\bibitem{ma2018dimensionality}
Xingjun Ma, Yisen Wang, Michael~E Houle, Shuo Zhou, Sarah~M Erfani, Shu-Tao
  Xia, Sudanthi Wijewickrema, and James Bailey.
\newblock Dimensionality-driven learning with noisy labels.
\newblock {\em arXiv preprint arXiv:1806.02612}, 2018.

\bibitem{mohri2018foundations}
Mehryar Mohri, Afshin Rostamizadeh, and Ameet Talwalkar.
\newblock {\em Foundations of machine learning}.
\newblock MIT press, 2018.

\bibitem{neyshabur2018towards}
Behnam Neyshabur, Zhiyuan Li, Srinadh Bhojanapalli, Yann LeCun, and Nathan
  Srebro.
\newblock Towards understanding the role of over-parametrization in
  generalization of neural networks.
\newblock {\em arXiv preprint arXiv:1805.12076}, 2018.

\bibitem{oymak2018overparameterized}
Samet Oymak and Mahdi Soltanolkotabi.
\newblock Overparameterized nonlinear learning: Gradient descent takes the
  shortest path?
\newblock {\em arXiv preprint arXiv:1812.10004}, 2018.

\bibitem{rahaman2018spectral}
Nasim Rahaman, Devansh Arpit, Aristide Baratin, Felix Draxler, Min Lin, Fred~A
  Hamprecht, Yoshua Bengio, and Aaron Courville.
\newblock On the spectral bias of deep neural networks.
\newblock {\em arXiv preprint arXiv:1806.08734}, 2018.

\bibitem{rolnick2017deep}
David Rolnick, Andreas Veit, Serge Belongie, and Nir Shavit.
\newblock Deep learning is robust to massive label noise.
\newblock {\em arXiv preprint arXiv:1705.10694}, 2017.

\bibitem{scherzer2001inverse}
Otmar Scherzer and Chuck Groetsch.
\newblock Inverse scale space theory for inverse problems.
\newblock In {\em International Conference on Scale-Space Theories in Computer
  Vision}, pages 317--325. Springer, 2001.

\bibitem{shi2008nonlinear}
Jianing Shi and Stanley Osher.
\newblock A nonlinear inverse scale space method for a convex multiplicative
  noise model.
\newblock {\em SIAM Journal on imaging sciences}, 1(3):294--321, 2008.

\bibitem{smale2007learning}
Steve Smale and Ding-Xuan Zhou.
\newblock Learning theory estimates via integral operators and their
  approximations.
\newblock {\em Constructive approximation}, 26(2):153--172, 2007.

\bibitem{tanaka2018joint}
Daiki Tanaka, Daiki Ikami, Toshihiko Yamasaki, and Kiyoharu Aizawa.
\newblock Joint optimization framework for learning with noisy labels.
\newblock In {\em Proceedings of the IEEE Conference on Computer Vision and
  Pattern Recognition}, pages 5552--5560, 2018.

\bibitem{vito2005learning}
Ernesto~De Vito, Lorenzo Rosasco, Andrea Caponnetto, Umberto~De Giovannini, and
  Francesca Odone.
\newblock Learning from examples as an inverse problem.
\newblock {\em Journal of Machine Learning Research}, 6(May):883--904, 2005.

\bibitem{xu1992iterative}
Jinchao Xu.
\newblock Iterative methods by space decomposition and subspace correction.
\newblock {\em SIAM review}, 34(4):581--613, 1992.

\bibitem{xu2007iterative}
Jinjun Xu and Stanley Osher.
\newblock Iterative regularization and nonlinear inverse scale space applied to
  wavelet-based denoising.
\newblock {\em IEEE Transactions on Image Processing}, 16(2):534--544, 2007.

\bibitem{xu2019frequency}
Zhi-Qin~John Xu, Yaoyu Zhang, Tao Luo, Yanyang Xiao, and Zheng Ma.
\newblock Frequency principle: Fourier analysis sheds light on deep neural
  networks.
\newblock {\em arXiv preprint arXiv:1901.06523}, 2019.

\bibitem{yang2018knowledge}
Chenglin Yang, Lingxi Xie, Siyuan Qiao, and Alan Yuille.
\newblock Knowledge distillation in generations: More tolerant teachers educate
  better students.
\newblock {\em arXiv preprint arXiv:1805.05551}, 2018.

\bibitem{yang2018snapshot}
Chenglin Yang, Lingxi Xie, Chi Su, and Alan~L Yuille.
\newblock Snapshot distillation: Teacher-student optimization in one
  generation.
\newblock {\em arXiv preprint arXiv:1812.00123}, 2018.

\bibitem{yao2007early}
Yuan Yao, Lorenzo Rosasco, and Andrea Caponnetto.
\newblock On early stopping in gradient descent learning.
\newblock {\em Constructive Approximation}, 26(2):289--315, 2007.

\bibitem{yu2019does}
Xingrui Yu, Bo~Han, Jiangchao Yao, Gang Niu, Ivor~W Tsang, and Masashi
  Sugiyama.
\newblock How does disagreement benefit co-teaching?
\newblock {\em arXiv preprint arXiv:1901.04215}, 2019.

\bibitem{zagoruyko2016paying}
Sergey Zagoruyko and Nikos Komodakis.
\newblock Paying more attention to attention: Improving the performance of
  convolutional neural networks via attention transfer.
\newblock {\em arXiv preprint arXiv:1612.03928}, 2016.

\bibitem{zagoruyko2016wide}
Sergey Zagoruyko and Nikos Komodakis.
\newblock Wide residual networks.
\newblock {\em arXiv preprint arXiv:1605.07146}, 2016.

\bibitem{zhang2016understanding}
Chiyuan Zhang, Samy Bengio, Moritz Hardt, Benjamin Recht, and Oriol Vinyals.
\newblock Understanding deep learning requires rethinking generalization.
\newblock {\em arXiv preprint arXiv:1611.03530}, 2016.

\bibitem{zhang2018generalized}
Zhilu Zhang and Mert Sabuncu.
\newblock Generalized cross entropy loss for training deep neural networks with
  noisy labels.
\newblock In {\em Advances in Neural Information Processing Systems}, pages
  8778--8788, 2018.

\end{thebibliography}
\bibliographystyle{plain}

\newpage

\appendix

\section{Proof Details}

\subsection{Neural Network Properties}

As preliminaries, we first discuss some properties of the neural network. We begin with the jacobian of the one layer neural network $x\rightarrow v^\intercal\phi(Wx)$, the Jacobian matrix with respect to $W$ takes the form
$$
\bfJ^\intercal(W)=(\diag(v)\phi'(WX^\intercal))*X^\intercal
$$
Thus
$$
\bfJ(W)\bfJ(W)^\intercal=(\phi'(XW^\intercal)\diag(v)\diag(v)\phi'(XW^\intercal))\odot(XX^\intercal)
$$
First we borrow Lemma 6.6, 6.7, 6.8 from \cite{oymak2018overparameterized} and Theorem 6.7, 6.8 from \cite{Li2019GradientDW}.

\begin{lemma}\label{lemma1}
Let $X=[x_1,x_2,\dots,x_n]^T$ be a data matrix made up of data with unit Euclidean norm. Assuming that $\lambda(X)> 0$, the following properties hold.
\begin{itemize}
    \item $\Vert \bfJ(W, X)-\bfJ(\tilde{W},X)\Vert\leq \frac{\Gamma\sqrt{n}}{\sqrt{k}}\Vert W-\tilde{W}\Vert_F$,
    \item $\Vert\bfJ(W,X)\Vert\leq \Gamma \sqrt{n}$,
    \item As long as $k\geq \frac{20\Gamma^2n\log \frac{n}{\delta}}{\lambda(X)}$, at random Gaussian initialization $W_0\sim \mathcal{N}(0,1)^{k\times d}$, with probability at least $1-\delta$, we have
    \begin{equation}
        \sigma_{min}(\bfJ(W_0, X))\geq \sqrt{\frac{\lambda(X)}{2}}.
    \end{equation}
\end{itemize}
\end{lemma}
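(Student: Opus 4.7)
The plan is to read all three bullets off the explicit row-wise form of the Jacobian. Writing $W$ in rows $W_1,\dots,W_k\in\mathbb{R}^d$, the gradient of $f(W,x_i)=v^\intercal\phi(Wx_i)$ with respect to the $r$-th row is $v_r\phi'(W_r^\intercal x_i)x_i$, so the $(i,(r,\cdot))$-block of $\bfJ(W,X)$ equals $v_r\phi'(W_r^\intercal x_i)x_i^\intercal$. Using $v_r^2=1/k$ immediately yields the structural identity
\begin{equation*}
\bfJ(W,X)\bfJ(W,X)^\intercal=\tfrac{1}{k}\bigl(\phi'(XW^\intercal)\phi'(XW^\intercal)^\intercal\bigr)\odot(XX^\intercal),
\end{equation*}
which is the common backbone of all three claims.

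For the boundedness bullet, I would bound the Frobenius norm directly: using $|\phi'|\le\Gamma$ and $\|x_i\|_2=1$,
$\|\bfJ(W,X)\|_F^2=\sum_{i,r}v_r^2\,\phi'(W_r^\intercal x_i)^2\,\|x_i\|_2^2\le \Gamma^2 n$,
and $\|\cdot\|\le\|\cdot\|_F$ gives $\|\bfJ(W,X)\|\le\Gamma\sqrt{n}$. For the Lipschitz bullet, the same decomposition applied to $\bfJ(W,X)-\bfJ(\tilde W,X)$ produces block differences $v_r(\phi'(W_r^\intercal x_i)-\phi'(\tilde W_r^\intercal x_i))x_i^\intercal$; the mean value theorem with $|\phi''|\le\Gamma$ and $\|x_i\|_2=1$ gives $|\phi'(W_r^\intercal x_i)-\phi'(\tilde W_r^\intercal x_i)|\le\Gamma\|W_r-\tilde W_r\|_2$, so squaring and summing over $i,r$ yields $\|\bfJ(W,X)-\bfJ(\tilde W,X)\|_F^2\le(\Gamma^2 n/k)\|W-\tilde W\|_F^2$, from which the stated spectral-norm bound follows by $\|\cdot\|\le\|\cdot\|_F$.

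For the spectral lower bound at random initialization, I rewrite the Gram matrix as a sample average of i.i.d.\ PSD matrices. Setting $u_r=\phi'(XW_{0,r})\in\mathbb{R}^n$, the structural identity becomes $\bfJ(W_0,X)\bfJ(W_0,X)^\intercal=\tfrac{1}{k}\sum_{r=1}^k Z_r$ with $Z_r=(u_r u_r^\intercal)\odot(XX^\intercal)$. These summands are i.i.d.\ with $\mathbb{E}Z_r=\mathbb{E}_{g\sim\mathcal{N}(0,I_d)}[\phi'(Xg)\phi'(Xg)^\intercal]\odot(XX^\intercal)=\Sigma(X)$, so $\lambda_{\min}(\mathbb{E}Z_r)=\lambda(X)$. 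Schur's PSD Hadamard inequality $\|A\odot B\|\le\|A\|\max_i B_{ii}$, together with $(XX^\intercal)_{ii}=\|x_i\|_2^2=1$ and $\|u_r\|_2^2\le n\Gamma^2$, gives $0\preceq Z_r$ and $\|Z_r\|\le n\Gamma^2$. Matrix Chernoff with relative error $1/2$ then produces
\begin{equation*}
\Pr\!\Bigl[\lambda_{\min}\!\bigl(\tfrac{1}{k}\textstyle\sum_r Z_r\bigr)\le\tfrac{\lambda(X)}{2}\Bigr]\le n\exp\!\Bigl(-\tfrac{c\,k\,\lambda(X)}{n\Gamma^2}\Bigr),
\end{equation*}
which drops below $\delta$ once $k=\Omega(n\Gamma^2\log(n/\delta)/\lambda(X))$; tracking the Chernoff constants carefully recovers the prefactor $20$ in the statement. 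A final application of Weyl's inequality upgrades this to $\sigma_{\min}(\bfJ(W_0,X))\ge\sqrt{\lambda(X)/2}$.

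The main obstacle is the third bullet: one needs Schur's PSD Hadamard operator-norm inequality to push the spectral bound through the Hadamard factor, and matrix Chernoff with careful constant bookkeeping to match the stated prefactor. The first two bullets reduce to essentially one Frobenius-norm calculation driven by the boundedness of $\phi'$ and $\phi''$ and present no real difficulty.
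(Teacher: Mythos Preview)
Your proposal is correct. The paper does not supply its own proof of this lemma; it simply states ``First we borrow Lemma 6.6, 6.7, 6.8 from \cite{oymak2018overparameterized} and Theorem 6.7, 6.8 from \cite{Li2019GradientDW}'' and moves on. What you have written is essentially the standard argument that appears in those references: the block-row form of $\bfJ(W,X)$ together with $|\phi'|,|\phi''|\le\Gamma$ and $\|x_i\|_2=1$ gives the first two bullets by a direct Frobenius-norm computation, and the identity $\bfJ(W_0,X)\bfJ(W_0,X)^\intercal=\tfrac{1}{k}\sum_r(u_ru_r^\intercal)\odot(XX^\intercal)$ followed by a matrix Chernoff bound gives the third.

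Two small remarks. First, your appeal to ``Weyl's inequality'' at the end is unnecessary: once $\lambda_{\min}(\bfJ\bfJ^\intercal)\ge\lambda(X)/2$, the bound $\sigma_{\min}(\bfJ)\ge\sqrt{\lambda(X)/2}$ is immediate from the definition of singular values. Second, for the summand bound $\|Z_r\|\le n\Gamma^2$ it is cleanest to observe that $(u_ru_r^\intercal)\odot(XX^\intercal)=D_{u_r}XX^\intercal D_{u_r}$ with $D_{u_r}=\diag(u_r)$, so $\|Z_r\|=\|D_{u_r}X\|^2\le\|D_{u_r}X\|_F^2=\sum_i u_{r,i}^2\|x_i\|_2^2=\|u_r\|_2^2\le n\Gamma^2$; this is equivalent to your Schur--Hadamard route but avoids any ambiguity about which version of that inequality is being invoked.
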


\begin{lemma}\label{lemma2}
Let $X=[x_1,x_2,\dots,x_n]^T$ be the data matrix of a $\epsilon$-clusterable dataset. Set $\tilde{X}=[\tilde{x}_1,\tilde{x}_2,\dots,\tilde{x}_n]^T$ in which $\tilde{x}_i$ corresponds to the center of cluster including $x_i$. What's more, we define the matrix of cluster center $C=[c_1,c_2,\dots,c_K]^T$. Assuming that $\lambda(C)> 0$, the following properties hold.
\begin{itemize}
    \item $\Vert \bfJ(W, \tilde{X})-\bfJ(\tilde{W},\tilde{X})\Vert\leq \frac{\Gamma\sqrt{c_{up}n}}{\sqrt{k}}\Vert W-\tilde{W}\Vert_F\leq\frac{\Gamma\sqrt{n}}{\sqrt{k}}\Vert W-\tilde{W}\Vert_F$ ,
    \item $\Vert\bfJ(W,X)\Vert\leq \Gamma \sqrt{c_{up}n}\leq\Gamma \sqrt{n}$,
    \item As long as $k\geq \frac{20\Gamma^2K\log \frac{K}{\delta}}{\lambda(C)}$, at random Gaussian initialization $W_0\sim \mathcal{N}(0,1)^{k\times d}$, with probability at least $1-\delta$, we have
    \begin{equation}
        \sigma_{min}(\bfJ(W_0, X),\cS_+)\geq \sqrt{\frac{c_{low}n\lambda(C)}{2K}},
    \end{equation}
    \item $range(\bfJ(W,\tilde{X}))\subset \cS_+$ for any parameter matrix $W$.
\end{itemize}
\end{lemma}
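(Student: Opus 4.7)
The plan is to prove all four items by exploiting that $\tilde{X}$ has only $K$ distinct rows, so operations on $\tilde{X}$ reduce to operations on the cluster-center matrix $C$, to which Lemma \ref{lemma1} applies directly. Let $P\in\{0,1\}^{n\times K}$ be the cluster-indicator matrix with $P_{il}=1$ iff $x_i$ belongs to cluster $l$. Two observations are central. First, $\tilde{X}=PC$ and, since the $i$-th row of $\bfJ(W,\tilde{X})$ depends only on $\tilde{x}_i=c_{l(i)}$, one has the factorization
$$\bfJ(W,\tilde{X})\;=\;P\,\bfJ(W,C)$$
as linear maps from parameter space to $\mathbb{R}^n$. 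Second, $P^\intercal P = \diag(n_1,\dots,n_K)$ with $c_{low}n/K\le n_l\le c_{up}n/K$, so $\Vert P\Vert\le\sqrt{c_{up}n/K}$, and $\mathrm{range}(P)=\mathcal{S}_+$ is precisely the subspace of vectors that are constant on each cluster.

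Items (1), (2) and (4) follow quickly from this setup. For (1), write $\bfJ(W,\tilde{X})-\bfJ(\tilde{W},\tilde{X}) = P\bigl[\bfJ(W,C)-\bfJ(\tilde{W},C)\bigr]$ and apply the Lipschitz bound of Lemma \ref{lemma1} to the $K\times d$ matrix $C$ (whose rows are unit-norm), giving $\Vert\bfJ(W,C)-\bfJ(\tilde{W},C)\Vert\le\tfrac{\Gamma\sqrt{K}}{\sqrt{k}}\Vert W-\tilde{W}\Vert_F$; multiplying by $\Vert P\Vert\le\sqrt{c_{up}n/K}$ yields the claimed $\tfrac{\Gamma\sqrt{c_{up}n}}{\sqrt{k}}\Vert W-\tilde{W}\Vert_F$. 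For (2), combine this operator-norm bound on $\bfJ(W,\tilde{X})$ with a triangle inequality against $\bfJ(W,X)-\bfJ(W,\tilde{X})$, the latter being $O(\epsilon)$ in operator norm by boundedness of $\phi'$ and $\phi''$. Item (4) is immediate: $\mathrm{range}(\bfJ(W,\tilde{X}))=\mathrm{range}(P\,\bfJ(W,C))\subset\mathrm{range}(P)=\mathcal{S}_+$.

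The substance of the lemma is (3), which I would handle in two stages. Stage one: apply Lemma \ref{lemma1} directly to $C$. Since $C$ has $K$ unit-norm rows, the hypothesis $k\ge\tfrac{20\Gamma^2 K\log(K/\delta)}{\lambda(C)}$ is exactly what Lemma \ref{lemma1} requires, and with probability $\ge 1-\delta$ we obtain $\sigma_{\min}(\bfJ(W_0,C))\ge \sqrt{\lambda(C)/2}$. Now for any $y\in\mathcal{S}_+$ write $y=Pz$ with $z\in\mathbb{R}^K$; then $\Vert y\Vert^2=\sum_l n_l z_l^2\le c_{up}n\Vert z\Vert^2/K$, and
$$\Vert\bfJ(W_0,\tilde{X})^\intercal y\Vert\;=\;\Vert\bfJ(W_0,C)^\intercal\diag(n_l)\,z\Vert\;\ge\;\sqrt{\tfrac{\lambda(C)}{2}}\cdot\tfrac{c_{low}n}{K}\Vert z\Vert.$$
Combining the two inequalities gives a lower bound on $\sigma_{\min}(\bfJ(W_0,\tilde{X}),\mathcal{S}_+)$ of order $\sqrt{c_{low}n\lambda(C)/K}$, matching the stated bound up to absorbing the $c_{up}$ constant. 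Stage two: transfer from $\tilde{X}$ to $X$. Because $\Vert x_i-\tilde{x}_i\Vert\le\epsilon$, a direct calculation controls $\Vert\bfJ(W_0,X)-\bfJ(W_0,\tilde{X})\Vert$ by a quantity proportional to $\epsilon(1+\Vert W_0\Vert)$ via boundedness of $\phi'$ and $\phi''$; since $\Vert W_0\Vert=O(\sqrt{kd})$ with high probability for Gaussian $W_0$, the smallness of $\epsilon$ ensures this perturbation is strictly dominated by the stage-one lower bound on $\mathcal{S}_+$.

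The main obstacle is the perturbation step in stage two of item (3): one must balance the Gaussian-induced growth of $\Vert W_0\Vert$ against the smallness of $\epsilon$ so that the lower bound survives, while also confirming that the perturbation does not fold $\mathcal{S}_+$ out of the effective range. The restriction to $\mathcal{S}_+$ is essential here, because $\bfJ(W_0,\tilde{X})$ annihilates $\mathcal{S}_+^\perp$, so no full-space singular-value bound can be expected; this explains both why the lemma states the bound only on $\mathcal{S}_+$ and why items (3) and (4) must be proved in tandem with the factorization $\bfJ(W,\tilde{X})=P\,\bfJ(W,C)$.
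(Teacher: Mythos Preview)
The paper does not give its own proof of this lemma; it is quoted from Theorems~6.7--6.8 of \cite{Li2019GradientDW} and Lemmas~6.6--6.8 of \cite{oymak2018overparameterized}. Your approach via the factorization $\bfJ(W,\tilde X)=P\,\bfJ(W,C)$ with $P$ the $n\times K$ cluster-indicator is exactly the mechanism used in those references, and items (1), (2), (4) follow from it as you outline.

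Two small remarks. First, in item~(3) your separate bounds on $\Vert\diag(n_l)z\Vert$ and $\Vert y\Vert$ give $(c_{low}/\sqrt{c_{up}})\sqrt{n\lambda(C)/(2K)}$, not the stated constant. The exact $\sqrt{c_{low}}$ comes from substituting $w=\diag(\sqrt{n_l})z$ so that
\[
\frac{\Vert\bfJ(W_0,\tilde X)^\intercal y\Vert^2}{\Vert y\Vert^2}
=\frac{\Vert\bfJ(W_0,C)^\intercal\diag(\sqrt{n_l})w\Vert^2}{\Vert w\Vert^2}
\ge\sigma_{\min}\bigl(\diag(\sqrt{n_l})\bfJ(W_0,C)\bigr)^2
\ge\frac{c_{low}n}{K}\cdot\frac{\lambda(C)}{2}.
\]
Second, the occurrences of $X$ in items~(2) and~(3) are transcription slips for $\tilde X$: the lemma carries no hypothesis on $\epsilon$, and when the paper later invokes this lemma it applies it to $\sigma_{\min}(\bfJ(W_0,\tilde X),\cS_+)$, handling the passage to $X$ separately via the perturbation Lemmas~\ref{lemma4} and~\ref{lemma3}. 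So your ``stage two'' is not needed here.
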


Then, we gives out the perturbation analysis of the Jacobian matrix.

\begin{lemma}\label{lemma4}
Let $X$ be a $\epsilon$-clusterable data matrix with its center matrix $\tilde{X}$. For parameter matrices $W,\tilde{W}$, we have
\begin{equation}
    \Vert \bfJ(W, X)-\bfJ(\tilde{W},\tilde{X})\Vert\leq \frac{\Gamma\sqrt{n}}{\sqrt{k}}(\Vert W-\tilde{W}\Vert_F+\Vert\tilde{W}
    \Vert \epsilon+\sqrt{k}\epsilon)
\end{equation}
\end{lemma}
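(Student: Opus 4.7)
The plan is to apply the triangle inequality to introduce an intermediate Jacobian, and bound each of the two resulting pieces separately:
\begin{equation*}
\bfJ(W,X)-\bfJ(\tilde W,\tilde X) = \bigl[\bfJ(W,X)-\bfJ(\tilde W,X)\bigr] + \bigl[\bfJ(\tilde W,X)-\bfJ(\tilde W,\tilde X)\bigr].
\end{equation*}
The first bracket varies only the weights on a common data matrix $X$, whose rows have Euclidean norm at most $1+\epsilon$ in the clusterable setting. The same Jacobian-Lipschitz calculation that proves Lemma~\ref{lemma1} applies verbatim and yields $\|\bfJ(W,X)-\bfJ(\tilde W,X)\|\le \frac{\Gamma\sqrt n}{\sqrt k}\|W-\tilde W\|_F$, which produces the first term of the target bound.

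The second bracket fixes $\tilde W$ and varies only the data, and is where the two $\epsilon$-terms come from. For a test perturbation $\Delta W$ with $\|\Delta W\|_F=1$, I would split the $i$-th coordinate of $(\bfJ(\tilde W,X)-\bfJ(\tilde W,\tilde X))[\Delta W]$ by the product-rule identity $A_1B_1-A_2B_2=A_1(B_1-B_2)+(A_1-A_2)B_2$ into
\begin{equation*}
v^\intercal\bigl(\phi'(\tilde W x_i)\odot\Delta W(x_i-\tilde x_i)\bigr) + v^\intercal\bigl((\phi'(\tilde W x_i)-\phi'(\tilde W\tilde x_i))\odot \Delta W\tilde x_i\bigr).
\end{equation*}
Each summand is estimated by Cauchy--Schwarz together with $\|v\|_2=1$, $|\phi'|\le\Gamma$, and the $\Gamma$-Lipschitzness of $\phi'$. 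The first summand is bounded by $\Gamma\|\Delta W(x_i-\tilde x_i)\|_2\le\Gamma\|\Delta W\|_F\epsilon$, contributing $\Gamma\sqrt n\,\epsilon=\frac{\Gamma\sqrt n}{\sqrt k}\cdot\sqrt k\,\epsilon$ after aggregating the $n$ coordinates in $\ell_2$. For the second summand, letting $\delta_{i,j}=\phi'(\tilde w_j^\intercal x_i)-\phi'(\tilde w_j^\intercal\tilde x_i)$, the Lipschitz estimate gives $\sum_j\delta_{i,j}^2\le \Gamma^2\|\tilde W(x_i-\tilde x_i)\|_2^2\le \Gamma^2\|\tilde W\|^2\epsilon^2$, and another application of Cauchy--Schwarz (exploiting $v_j^2=1/k$) yields a per-coordinate bound of $\frac{\Gamma\|\tilde W\|\epsilon}{\sqrt k}\|\Delta W\|_F$, i.e. an operator-norm contribution of $\frac{\Gamma\sqrt n}{\sqrt k}\|\tilde W\|\epsilon$. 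Summing the three contributions and factoring $\frac{\Gamma\sqrt n}{\sqrt k}$ gives exactly the claimed inequality.

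The main obstacle I anticipate is purely bookkeeping: one must be careful to use the spectral norm $\|\tilde W\|$ (not $\|\tilde W\|_F$) when bounding $\|\tilde W(x_i-\tilde x_i)\|_2$, so that the middle term has the sharper dependence on $\tilde W$ stated in the lemma; and one must track how $\|\Delta W\|_F$ versus per-row spectral estimates propagate when converting entrywise bounds into an operator-norm bound on $\mathbb{R}^{k\times d}\to\mathbb{R}^n$. No probabilistic input is needed; the lemma is entirely deterministic once $X$, $\tilde X$, $W$, $\tilde W$ are fixed, so no concentration machinery needs to be invoked beyond what was already used in Lemmas~\ref{lemma1}--\ref{lemma2}.
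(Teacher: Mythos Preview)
Your proposal is correct and follows essentially the same approach as the paper: the same triangle-inequality split $\bfJ(W,X)-\bfJ(\tilde W,\tilde X)=[\bfJ(W,X)-\bfJ(\tilde W,X)]+[\bfJ(\tilde W,X)-\bfJ(\tilde W,\tilde X)]$, the first bracket handled by Lemma~\ref{lemma1}, and the second bracket split by the product-rule identity and bounded via $|\phi'|\le\Gamma$ and the $\Gamma$-Lipschitzness of $\phi'$. The only cosmetic differences are that the paper carries out the second bracket at the matrix/Khatri--Rao level rather than coordinate-wise, and uses the other ordering of the product-rule split (inserting $\phi'(\tilde W\tilde X^\intercal)*X^\intercal$ as the intermediate term instead of your choice); neither affects the argument or the final constants.
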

\begin{proof}
We bound $\Vert \bfJ(W, X)-\bfJ(\tilde{W},\tilde{X})\Vert$ by

\begin{equation*}
    \Vert \bfJ(W, X)-\bfJ(\tilde{W},\tilde{X})\Vert\leq \Vert \bfJ(W, X)-\bfJ(\tilde{W},X)\Vert+\Vert \bfJ(\tilde{W}, X)-\bfJ(\tilde{W},\tilde{X})\Vert
\end{equation*}

The first term is bounded by Lemma \ref{lemma1}. As to the second term, we bound it by

\begin{align*}
    \Vert \bfJ(\tilde{W}, X)-\bfJ(\tilde{W},\tilde{X})\Vert=& \frac{1}{\sqrt{k}}\Vert \phi'(\tilde{W}X^T)* X^T-\phi'(\tilde{W}\tilde{X}^T)* \tilde{X}^T\Vert\\
    \leq& \frac{1}{\sqrt{k}}(\Vert \phi'(\tilde{W}X^T)* X^T-\phi'(\tilde{W}\tilde{X}^T)* X^T\Vert\\
    &+\Vert \phi'(\tilde{W}\tilde{X}^T)* X^T-\phi'(\tilde{W}\tilde{X}^T)* \tilde{X}^T\Vert)\\
    \leq& \frac{1}{\sqrt{k}}(\Vert \phi'(\tilde{W}X^T)-\phi'(\tilde{W}\tilde{X}^T)\Vert\\
    &+\Vert \phi'(\tilde{W}\tilde{X}^T)* (X-\tilde{X})^T\Vert)\\
    \leq& \frac{1}{\sqrt{k}}(\Gamma \Vert \tilde{W}\Vert\Vert X-\tilde{X}\Vert+\Gamma \sqrt{k} \Vert X-\tilde{X}\Vert)\\
    \leq& \frac{\sqrt{n}\Gamma\epsilon}{\sqrt{k}}(\Vert \tilde{W}\Vert+\sqrt{k})
\end{align*}
Combining the inequality above, we get
\begin{equation*}
    \Vert \bfJ(W, X)-\bfJ(\tilde{W},\tilde{X})\Vert\leq \frac{\Gamma\sqrt{n}}{\sqrt{k}}(\Vert W-\tilde{W}\Vert_F+\Vert\tilde{W}
    \Vert \epsilon+\sqrt{k}\epsilon)
\end{equation*}
\end{proof}

\begin{lemma}\label{lemma3}
Let $X$ be a $\epsilon$-clusterable data matrix with its center matrix $\tilde{X}$. We assume $\Vert \tilde{W}_1\Vert, \Vert \tilde{W}_2\Vert$ have a upper bound $c\sqrt{k}$. Then for parameter matrices $W_1,W_2,\tilde{W}_1,\tilde{W}_2$, we have 
\begin{equation}
    \Vert \bfJ(W_1, W_2, X)-\bfJ(\tilde{W}_1,\tilde{W}_2,\tilde{X})\Vert\leq \Gamma\sqrt{n}(\frac{\Vert W_1-\tilde{W}_1\Vert_F+\Vert W_2-\tilde{W}_2 \Vert_F}{2\sqrt{k}}+(c+1)\epsilon)
\end{equation}
\end{lemma}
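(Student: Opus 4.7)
The plan is to follow the triangle-inequality strategy used in the proof of Lemma \ref{lemma4}, but with one extra step to account for the second weight matrix. I would first decompose $\Vert \bfJ(W_1,W_2,X) - \bfJ(\tilde{W}_1,\tilde{W}_2,\tilde{X})\Vert$ as a sum of three pieces: $A_1 = \Vert \bfJ(W_1,W_2,X) - \bfJ(\tilde{W}_1,W_2,X)\Vert$ (perturbing only the first weight matrix), $A_2 = \Vert \bfJ(\tilde{W}_1,W_2,X) - \bfJ(\tilde{W}_1,\tilde{W}_2,X)\Vert$ (perturbing only the second), and $A_3 = \Vert \bfJ(\tilde{W}_1,\tilde{W}_2,X) - \bfJ(\tilde{W}_1,\tilde{W}_2,\tilde{X})\Vert$ (perturbing only the data matrix). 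Each of the three terms matches one of the summands on the right-hand side of the claim.

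For $A_1$ and $A_2$, I would establish Lipschitz continuity of the two-matrix Jacobian in each weight-matrix argument separately, in the style of the first bullet of Lemma \ref{lemma1}. Since the Jacobian with respect to a single block still has the form $(\diag(\cdot)\phi'(\cdot))*(\cdot)^{\intercal}/\sqrt{k}$, the same $\Gamma$-Lipschitzness of $\phi'$ and unit-norm bound on the rows of $X$ carry over. I would aim for per-term bounds of the shape $\frac{\Gamma\sqrt{n}}{2\sqrt{k}}\Vert W_i - \tilde{W}_i\Vert_F$, where the factor $\tfrac{1}{2}$ reflects a per-block analysis of the two-matrix Jacobian (one block treats the other weight matrix as frozen, so that its contribution to the operator norm is strictly sub-additive in the single-block Lipschitz constants rather than simply twice that of Lemma \ref{lemma1}).

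For $A_3$, I would recycle essentially verbatim the data-perturbation calculation inside the proof of Lemma \ref{lemma4}. Writing $\bfJ$ in its Khatri--Rao/Hadamard product form, inserting the telescoping term $\phi'(\tilde{W}_1 \tilde{X}^{\intercal})*X^{\intercal}$, and invoking $\Gamma$-Lipschitzness and boundedness of $\phi'$ together with $\Vert X-\tilde{X}\Vert \le \sqrt{n}\,\epsilon$ gives
\begin{equation*}
A_3 \;\le\; \tfrac{\Gamma\sqrt{n}}{\sqrt{k}}\bigl(\Vert\tilde{W}_1\Vert\,\epsilon + \sqrt{k}\,\epsilon\bigr),
\end{equation*}
and then the hypothesis $\Vert \tilde{W}_1\Vert\le c\sqrt{k}$ yields $A_3 \le \Gamma\sqrt{n}(c+1)\epsilon$, matching the second summand in the claim. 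Summing $A_1+A_2+A_3$ gives the stated bound.

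The main obstacle will be nailing down the $\tfrac{1}{2}$ coefficient in front of the Frobenius norms. Naively invoking Lemma \ref{lemma1} twice would overshoot the claim by a factor of two, so the heart of the argument is a careful per-block Lipschitz analysis of $\bfJ(W_1,W_2,X)$ using the specific block structure of the two-parameter Jacobian (not spelled out in the excerpt) to sharpen the bound. Everything else is bookkeeping already contained in Lemmas \ref{lemma1} and \ref{lemma4}.
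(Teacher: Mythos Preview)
Your plan rests on a misreading of what $\bfJ(W_1,W_2,X)$ is. It is not a block Jacobian with respect to two separate weight matrices; it is the \emph{average Jacobian}
\[
\bfJ(W_1,W_2,X)=\int_0^1 \bfJ\bigl(W_2+\alpha(W_1-W_2),X\bigr)\,d\alpha,
\]
i.e.\ the line integral of the ordinary single-parameter Jacobian along the segment from $W_2$ to $W_1$. There is no ``block structure'' to exploit, and the $\tfrac12$ in the claim does not come from any sub-additivity argument over blocks. So the mechanism you describe for $A_1$ and $A_2$ (``per-block Lipschitz analysis'', ``one block treats the other weight matrix as frozen'') simply does not exist here, and as written the plan has no way to produce the factor $\tfrac12$. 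Likewise your telescoping for $A_3$ with the term $\phi'(\tilde W_1\tilde X^{\intercal})*X^{\intercal}$ is off: the integrand involves $\tilde W_2+\alpha(\tilde W_1-\tilde W_2)$, not $\tilde W_1$ alone.

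The paper's proof is a one-liner once you know the definition: bring the norm inside the integral and apply Lemma~\ref{lemma4} to the integrand at each $\alpha$. The weight-difference term becomes $\Vert\alpha(W_1-\tilde W_1)+(1-\alpha)(W_2-\tilde W_2)\Vert_F$, and integrating $\int_0^1\alpha\,d\alpha=\int_0^1(1-\alpha)\,d\alpha=\tfrac12$ is where the $\tfrac12$ actually comes from. The data-perturbation term uses $\Vert\tilde W_2+\alpha(\tilde W_1-\tilde W_2)\Vert\le c\sqrt{k}$ by convexity of the norm, giving $(c+1)\epsilon$. Your three-piece triangle inequality can be salvaged once you use the integral definition (the same $\int_0^1\alpha\,d\alpha$ appears in bounding $A_1$), but it is a detour compared to applying Lemma~\ref{lemma4} directly under the integral sign.
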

\begin{proof}
By the definition of average Jacobian, we have
\begin{align*}
    &\Vert \bfJ(W_1, W_2, X)-\bfJ(\tilde{W}_1,\tilde{W}_2,\tilde{X})\Vert\\
    \leq& \int^1_0 \Vert  \bfJ(W_2+\alpha(W_1-W_2),X)-\bfJ(\tilde{W}_2+\alpha(\tilde{W}_1-\tilde{W}_2),\tilde{X})\Vert d\alpha\\
    \leq& \int^1_0 \frac{\Gamma\sqrt{n}}{\sqrt{k}}(\Vert \alpha(W_1-\tilde{W}_1)+(1-\alpha)(W_2-\tilde{W}_2) \Vert_F+\Vert \tilde{W}_2+\alpha(\tilde{W}_1-\tilde{W}_2)\Vert\epsilon+\sqrt{k}\epsilon)d\alpha\\
    \leq& \int^1_0 \frac{\Gamma\sqrt{n}}{\sqrt{k}}(\alpha\Vert W_1-\tilde{W}_1\Vert_F+(1-\alpha)\Vert W_2-\tilde{W}_2 \Vert_F+(\alpha\Vert\tilde{W}_1\Vert+(1-\alpha)\Vert\tilde{W}_2\Vert)\epsilon+\sqrt{k}\epsilon)d\alpha\\
    \leq& \Gamma\sqrt{n}(\frac{\Vert W_1-\tilde{W}_1\Vert_F+\Vert W_2-\tilde{W}_2 \Vert_F}{2\sqrt{k}}+(c+1)\epsilon)
\end{align*}
\end{proof}

\subsection{Prove of the theorem}

First, we introduce the proof idea of our theorem. Our proof of the theorem divides the learning process into two stages. During the first stage, we aim to prove that the neural network will give out the right classification, \emph{i.e.} the $0$-$1$-loss converges to 0. The proof in this part is modified from \cite{Li2019GradientDW}. Furthermore, we proved that training $0$-$1$-loss will keep 0 until the second stage starts and the margin at the first stage will larger than $\frac{1-2\rho}{2}$. During the second stage, we prove that the neural networks start to further enlarge the margin and finally the $\ell_2$ loss starts to converge to zero.

Following \cite{oymak2018overparameterized,Li2019GradientDW,du2018gradient}, we directly analysis the dynamics of each individual prediction $f(W,x_i)$ for $i=1,2,\cdots,n$. \cite{oymak2018overparameterized,Li2019GradientDW} has shown that this dynamic can be illustrated by the average Jacobian.

\begin{definition}
We define the average Jacobian for two parameters $W_1$ and $W_2$ and data matrix $X$ as
\begin{equation}
    \bfJ(W_1,W_2,X)=\int^1_0 \bfJ(W_2+\alpha(W_1-W_2),X)d\alpha.
\end{equation}
\end{definition}

\begin{lemma}
(\cite{Li2019GradientDW} Lemma 6.2) Given gradient descent iterate $\hat \theta=\theta-\eta \nabla L(\theta)$, define
$$
C(\theta)=\bfJ(\hat \theta,\theta)\bfJ^\intercal(\theta)
$$
The residual $\hat r=f(\hat \theta)-y, r=f(\theta)-y$ obey the following equation
$$
\hat{r}=(I-\eta C(\theta))r
$$
\end{lemma}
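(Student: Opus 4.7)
The plan is to derive the identity by a short calculation combining the fundamental theorem of calculus (which is already baked into the definition of the average Jacobian given immediately above the lemma) with the definition of the gradient-descent step and the gradient of the MSE loss.

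First I would observe that $\hat r - r = f(\hat\theta) - f(\theta)$, since the label vector $y$ cancels. Viewing $f(\cdot, X)$ as a differentiable map from the parameter space into $\mathbb{R}^n$ and integrating along the straight-line path from $\theta$ to $\hat\theta$, the fundamental theorem of calculus yields
\begin{equation*}
f(\hat\theta) - f(\theta) = \left(\int_0^1 \bfJ\bigl(\theta + \alpha(\hat\theta - \theta)\bigr)\, d\alpha\right)(\hat\theta - \theta).
\end{equation*}
By the definition of the average Jacobian stated just before the lemma, the bracketed integral is exactly $\bfJ(\hat\theta, \theta)$, so $f(\hat\theta) - f(\theta) = \bfJ(\hat\theta, \theta)(\hat\theta - \theta)$.

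Next I would plug in the two other ingredients. The gradient descent rule gives $\hat\theta - \theta = -\eta \nabla L(\theta)$, and for the MSE loss $L(\theta) = \tfrac{1}{2}\|f(\theta, X) - y\|_2^2$ used throughout the paper, the chain rule gives $\nabla L(\theta) = \bfJ^\intercal(\theta)\, r$. Substituting, $f(\hat\theta) - f(\theta) = -\eta\, \bfJ(\hat\theta,\theta)\bfJ^\intercal(\theta)\, r = -\eta\, C(\theta)\, r$, and rearranging $\hat r = r + (f(\hat\theta) - f(\theta))$ produces the claimed identity $\hat r = (I - \eta C(\theta))\, r$.

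There is no real obstacle here: the proof is essentially a one-liner once one recognizes that the average Jacobian is the precise object produced by the fundamental theorem of calculus applied along the gradient step. The only item worth flagging is that the FTC step requires $f(\cdot, X)$ to be (absolutely) continuously differentiable along the segment between $\theta$ and $\hat\theta$, which follows from the assumed smoothness of $\phi$ (bounded $|\phi'|$ and $|\phi''|$) that the theorem's hypotheses already guarantee.
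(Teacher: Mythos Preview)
Your proof is correct and is exactly the standard derivation: the fundamental theorem of calculus identifies the average Jacobian as the linear map sending $\hat\theta-\theta$ to $f(\hat\theta)-f(\theta)$, and substituting $\hat\theta-\theta=-\eta\,\bfJ^\intercal(\theta)r$ from the MSE gradient step yields the identity. The paper does not supply its own proof for this lemma---it simply cites \cite{Li2019GradientDW}, Lemma~6.2---so there is nothing further to compare.
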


In our proof, we project the residual to the following subspace
\begin{definition}
Let $\{x_i\}_{i=1}^n$ be a $\epsilon$-clusterable dataset and $\{\tilde{x_i}\}_{i=1}^n$ be the associated cluster centers, that is, $\tilde{x_i}=c_l$ iff $x_i$ is from $l$th cluster. We define the support subspace $\mathcal{S}_+$ as a subspace of dimension $K$, dictated by the cluster membership as follows. Let $\Lambda_l\subset\{1,2,\cdots,n\}$ be the set of coordinates $i$ such that $\title{x_i}=c_l$. Then $\mathcal{S}_+$ is characterized by
$$
\mathcal{S}_+=\{v\in\mathbb{R}^n|v_{i_1}=v_{i_2} \forall i_1,i_2\in\Lambda_l, 1\le l \le K \}
$$
\end{definition}
\begin{definition}
We define the minimum eigenvalue of a matrix $B$ on a subspace $\mathcal{S}$
\begin{equation*}
    \sigma_{min}(B,\cS)=\min\limits_{\Vert v\Vert_2=1,UU^T=P_{\cS}}\Vert v^TU^TB\Vert_2,
\end{equation*}
where $P_{\cS}$ is the projection to the space $\cS$.
\end{definition}

Recall the generation process of the dataset

\begin{definition} (\textbf{Clusterable Dataset Descriptions})
\begin{itemize}

\item We assume that $\{x_i\}_{i\in[n]}$ contains points with unit Euclidean norm and has $K$ clusters. Let $n_l$ be the number of points in the $l$th cluster. Assume that number of data in each cluster is balanced in the sense that $n_l\geq c_{low}\frac{n}{K}$ for constant $ c_{low}>0$. 

\item For each of the $K$ clusters, we assume that all the input data lie within the Euclidean ball $\mathcal{B}(c_l,\epsilon)$, where $c_l$ is the center with unit Euclidean norm and $\epsilon>0$ is the radius.

\item A dataset satisfying the above assumptions is called an $\epsilon$-clusterable dataset.
\end{itemize}
\end{definition}

\subsubsection{The First Stage: Fitting the label}

First, we reduct the dataset to its cluster center, \emph{i.e.} $\epsilon=0$ for $\epsilon$-clusterable dataset.

\begin{lemma}\label{meta}
We fix the label function $$h(x)=\left\{
\begin{aligned}
\frac{4}{1-2\rho}x &\quad& \vert x\vert\leq \frac{1-2\rho}{4} \\
\sgn x             &\quad& \vert x\vert> \frac{1-2\rho}{4}
\end{aligned}
\right.$$
Let $\{x_i\}_{i=1}^n$ be a $\epsilon$-clusterable dataset and $\{\tilde{x_i}\}_{i=1}^n$ be the associated cluster centers, that is, $\tilde{x_i}=c_l$ iff $x_i$ is from $l$th cluster. We denote the data matrix $X$ and $\tilde{X}$. We denote $\alpha=\sqrt{\frac{c_{low}n\lambda(C)}{8K}}$,$\beta=\Gamma \sqrt{n}$ and $L=\frac{\Gamma\sqrt{n}}{\sqrt{k}}$. We set the learning rate $\eta=\min(\frac{1}{2\beta^2},\frac{\alpha}{L\beta\Theta})$, where $\Theta$ is maximum of the residual norm during the optimization. We suppose along the optimization path we have $\alpha\leq\Vert \bfJ(W,\tilde{X})v\Vert\leq \beta$ for all $v\in \cS_+$.
We set $T_1=\lceil 
\log_{1-\frac{\eta \alpha^2}{4}}\frac{1-2\rho}{8\Vert \bar{r}_0\Vert_2}
\rceil$, where $\bar{r}_0=P_{\cS_+}(f(W_0, \tilde{X})-y_0)$ is the projected residual on the space $\cS_+$. Then $\forall t\geq T_1$, we have
\begin{itemize}
    \item The neural network can learn the true label
    \begin{equation*}
    \sgn    f(W_t, \tilde{X})=\ty.
\end{equation*}

\item The weight vector will be close to its initialization for all iterations 
\begin{equation*}
    \sum\limits_{t=0}^{\infty}\Vert W_{t+1}-W_t\Vert_F\leq \eta\beta \sum\limits_{t=0}^{\infty}\Vert \bar{r}_{t}\Vert_2\leq \frac{\beta}{\alpha^2}(4\Vert\bar{r}_0\Vert_2+8\sqrt{n}).
\end{equation*}
\end{itemize}

\end{lemma}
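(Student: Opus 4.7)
The plan is to track the projected residual $\bar r_t = P_{\cS_+}(f(W_t,\tilde X)-y_t)$ and argue that it contracts geometrically until it falls below the margin threshold $(1-2\rho)/8$, at which point the sign of $f(W_t,\tilde X)$ must agree with $\ty$. The starting point is the average-Jacobian identity
\[
r_{t+1} = \bigl(I - \eta\, C(W_t)\bigr)\,r_t + (y_t - y_{t+1}),\qquad C(W_t) = \bfJ(W_{t+1},W_t,\tilde X)\,\bfJ(W_t,\tilde X)^{\!\top}.
\]
Since $\mathrm{range}(\bfJ(\cdot,\tilde X))\subset\cS_+$ by Lemma~\ref{lemma2} (and since $\bfJ(W_t,\tilde X)^{\!\top}$ therefore vanishes on $\cS_+^\perp$), the gradient-driven part of the recursion lives in $\cS_+$ and collapses to a clean evolution for $\bar r_t$ alone. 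The two-sided bound $\alpha\le\|\bfJ(W,\tilde X)v\|\le\beta$ hypothesized along the optimization path pins $C(W_t)$ on $\cS_+$ between $\alpha^2$ and $\beta^2$ up to a Lipschitz correction of order $L\eta\beta\|r_t\|$ coming from $\bfJ(W_{t+1},W_t)-\bfJ(W_t)$ (Lemma~\ref{lemma1}); with $\eta=\min(1/(2\beta^2),\alpha/(L\beta\Theta))$ this correction is absorbed and $I-\eta C(W_t)$ contracts on $\cS_+$ at rate at least $1-\eta\alpha^2/2$.

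The second ingredient is a bound on $\|y_t-y_{t+1}\|_2$. I would split
\[
y_{t+1}-y_t = (\alpha_{t+1}-\alpha_t)\bigl(y - h(f(W_t,X))\bigr) + (1-\alpha_{t+1})\bigl(h(f(W_{t+1},X)) - h(f(W_t,X))\bigr),
\]
observing that the first piece has norm at most $2\sqrt n\,|\alpha_t-\alpha_{t+1}|$ and is controlled directly by the slow-decay hypothesis on $\alpha_t$, while the second piece is $4/(1-2\rho)$-Lipschitz in the gradient step $f(W_{t+1},X)-f(W_t,X)$, whose norm is $O(\eta\beta\|\bar r_t\|)$ and therefore folds back into the contraction as a lower-order correction. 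Iterating the resulting scalar inequality $\|\bar r_{t+1}\|_2 \le (1-\eta\alpha^2/4)\|\bar r_t\|_2 + 2\sqrt n|\alpha_t-\alpha_{t+1}|$ and summing a geometric series yields
\[
\|\bar r_t\|_2 \le (1-\eta\alpha^2/4)^t\|\bar r_0\|_2 + \tfrac{4}{\eta\alpha^2}\max_{s<t} 2\sqrt n\,|\alpha_s-\alpha_{s+1}|.
\]
Substituting $\eta\alpha^2 = c_{\mathrm{low}}\lambda(C)/(16\Gamma^2K)$ together with the slow-decay constant from the theorem drives both terms below $(1-2\rho)/16$ at $t=T_1$, so $\|\bar r_{T_1}\|_2\le (1-2\rho)/8$.

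For the sign-recovery conclusion at $t\ge T_1$, the prediction on cluster $l$ is the single number $f(W_t,c_l)$ and the projected label on that cluster is the cluster-average of $y_{t,\cdot}$, equal to $\alpha_t(1-2\rho_l)\ty_l+(1-\alpha_t)\overline{h(f(W_t,\cdot))}_l$. The lower bound $\alpha_{T_1}\ge(\tfrac74-\tfrac32\rho)/(2-2\rho)$ in the theorem is calibrated precisely so that this cluster mean has magnitude at least $(1-2\rho)/2$ with sign $\ty_l$ even when the $h$-values are adversarial; combined with $\|\bar r_{T_1}\|_2\le(1-2\rho)/8$ this forces $\text{sgn}\,f(W_t,c_l)=\ty_l$ for every cluster. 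The weight-displacement bound is then immediate: $\|W_{t+1}-W_t\|_F=\eta\|\bfJ(W_t,\tilde X)^{\!\top} r_t\|_F\le \eta\beta\|\bar r_t\|_2$ (using $\mathrm{range}(\bfJ)\subset\cS_+$), and summing the geometric decay of $\|\bar r_t\|_2$ plus the drift contribution yields $\sum_t\|W_{t+1}-W_t\|_F\le (\beta/\alpha^2)(4\|\bar r_0\|_2+8\sqrt n)$.

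The main obstacle is the coupled nature of self-distillation: $y_t$ is itself a function of the current network, so $y_t-y_{t+1}$ is not an exogenous perturbation. The Lipschitz constant $4/(1-2\rho)$ of $h$ blows up as $\rho\to 1/2$ and threatens to devour the contraction factor, so the proof must carefully separate the \emph{explicit} $\alpha_t$-change (absorbed via the slow-decay hypothesis) from the \emph{implicit} network-change (routed through the gradient step and absorbed into the contraction), and verify that both cancellations are simultaneously feasible under the stated parameter regime.
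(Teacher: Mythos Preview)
Your plan mirrors the paper's proof closely: project onto $\cS_+$, use the average-Jacobian recursion, split $y_{t+1}-y_t$ into the explicit $\alpha$-drift and the implicit $h$-change, and absorb the latter into the contraction. That is exactly what the paper does, and your identification of the ``main obstacle'' is on point.

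There is, however, a genuine gap in the part you gloss over. Your mechanism for absorbing the Lipschitz term $(1-\alpha_t)L_h\eta\beta^2\|\bar r_t\|_2$ into the contraction works only while $(1-\alpha_t)$ is small enough, namely while $\alpha_t\ge 1-\alpha^2/(4\beta^2 L_h)$; this is precisely the second condition on $\alpha_{T_1}$ in the theorem. But the lemma asserts $\sgn f(W_t,\tilde X)=\ty$ for \emph{all} $t\ge T_1$, and after $T_1$ the coefficient $\alpha_t$ keeps decreasing to zero, so $(1-\alpha_t)L_h\beta^2$ eventually swamps $\alpha^2$ and your absorption argument collapses. The paper handles $t>T_1$ by a separate induction: once $\|\bar r_{T_1}\|_2\le(1-2\rho)/4$ and the margin bound $f(W_{T_1},c_l)\ty_l\ge(1-2\rho)/2$ holds, one shows simultaneously that $h(f(W_t,\tilde X))=\ty$ and $\|\bar r_t\|_2\le(1-2\rho)/4$ persist for all later $t$. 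The point is that once $h(f_t)=h(f_{t+1})=\ty$ \emph{exactly}, the implicit term $\|h(f(W_{t+1},\tilde X))-h(f(W_t,\tilde X))\|_2$ vanishes identically, so the recursion for $t\ge T_1$ becomes simply $\|\bar r_{t+1}\|_2\le(1-\eta\alpha^2/2)\|\bar r_t\|_2 + 2\sqrt n(\alpha_t-\alpha_{t+1})$ with no Lipschitz correction at all. You need this phase transition explicitly; without it your summed weight bound $\sum_t\|W_{t+1}-W_t\|_F$ is not justified either, since that sum runs over all $t\ge 0$ and requires the contraction inequality to hold uniformly.

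A minor arithmetic point: by the definition of $T_1$ the geometric term is exactly $(1-2\rho)/8$, not $(1-2\rho)/16$, so the bound at $T_1$ is $(1-2\rho)/4$ as in the paper, not $(1-2\rho)/8$. Also, throughout this lemma the labels are built from $h(f(W_t,\tilde X))$, not $h(f(W_t,X))$; you have $X$ in several places where $\tilde X$ is meant.
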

\begin{proof}
We denote $C_1\triangleq\max\limits_{t\geq 0} 2\sqrt{n}(\alpha_t-\alpha_{t+1})$. We denote $\bfJ_t=\bfJ(W_t,\tilde{X})$. Follow the step of gradient descent, we have
\begin{equation}\label{equ5}
    \Vert W_{t+1}-W_t\Vert_F=\eta\Vert \bfJ_t^T r_t\Vert_2\leq \eta\Vert \bfJ_t^T \bar{r}_t\Vert_2\leq \eta\beta\Vert \bar{r}_t\Vert_2
\end{equation}
We denote $\bfG_t=\bfJ(W_{t+1},W_t,\tilde{X})\bfJ(W_t,\tilde{X})^T$. Then the dynamic of gradient descent can be written by
\begin{equation}\label{equ1}
    f(W_{t+1},\tilde{X})=f(W_t,\tilde{X})-\eta\bfG_tr_t.
\end{equation}
We consider the dynamic of the residual $r_t$
\begin{equation*}
    r_{t+1}=(I-\eta \bfG_t)r_t+y_t-y_{t+1}.
\end{equation*}
We project the residual on $\cS_+$
\begin{equation*}
    \bar{r}_{t+1}=(I-\eta \bfG_t)\bar{r}_t+\bar{y}_t-\bar{y}_{t+1}.
\end{equation*}
Thus, the norm of the residual can be bounded by
\begin{align}\label{equ2}
    \Vert \bar{r}_{t+1}\Vert_2\leq& \Vert(I-\eta \bfG_t)\bar{r}_t\Vert_2+(1-\alpha_t)\Vert h(f(W_t,\tilde{X}))-h(f(W_{t+1},\tilde{X}))\Vert_2\\
    &+(\alpha_t-\alpha_{t+1})\Vert \bar{y}-h(f(W_{t+1},\tilde{X}))\Vert_2\\
    \leq & \Vert(I-\eta \bfG_t)\bar{r}_t\Vert_2+(1-\alpha_t)\eta L_h\beta^2\Vert \bar{r}_t\Vert_2+2\sqrt{n}(\alpha_t-\alpha_{t+1})
\end{align}
Utilizing the following lemma 
\begin{lemma}\label{eignlemma} (Claim2. \cite{Li2019GradientDW})
Let $\bfP_{\cS_+}$ be the projection matrix to $\cS_+$, then the following inequality holds
\begin{equation*}
    \beta^2\bfP_{\cS_+}\succeq \bfG_t\succeq \frac{1}{2}\bfJ_t \bfJ_t^T\succeq \frac{\alpha^2}{2} \bfP_{\cS_+}
\end{equation*}
on the condition that $\eta\leq \frac{\alpha}{L\beta\Vert r_t\Vert_2}$.
\end{lemma}
Therefore, as long as $\eta\leq \frac{\alpha}{L\beta\Vert r_t\Vert_2}$ holds, we have
\begin{equation}\label{equ3}
    \Vert \bar{r}_{t+1}\Vert_2\leq (1-\frac{\eta\alpha^2}{2})\Vert\bar{r}_t\Vert_2+(1-\alpha_t)\eta L_h\beta^2\Vert \bar{r}_t\Vert_2+2\sqrt{n}(\alpha_t-\alpha_{t+1}).
\end{equation}
When $\alpha_t\geq 1-\frac{\alpha^2}{4\beta^2 L_h}$, we have 
\begin{equation*}
    \Vert \bar{r}_{t+1}\Vert_2\leq (1-\frac{\eta\alpha^2}{4})\Vert\bar{r}_t\Vert_2+C_1.
\end{equation*}
By simple calculation, we have
\begin{equation*}
    \Vert \bar{r}_t\Vert_2\leq (1-\frac{\eta\alpha^2}{4})^t\Vert\bar{r}_0\Vert_2+\frac{4}{\eta\alpha^2}C_1.
\end{equation*}
After $T_1= 
\log_{1-\frac{\eta \alpha^2}{4}}\frac{1-2\rho}{8\Vert \bar{r}_0\Vert_2}
$ iterations, we have 
\begin{equation*}
\Vert \bar{r}_{T_1}\Vert_2\leq \frac{1-2\rho}{4}
\end{equation*}
as long as $C_1\leq \frac{\eta\alpha^2}{32}(1-2\rho)$. On the condition that $\alpha_{T_1}\geq 1-\frac{\frac{7}{4}-\frac{3}{2}\rho}{2-2\rho}$, we have
\begin{equation*}
    \bar{y}_{T_1}(i)\tilde{y}(i)\geq \frac{3}{4}(1-2\rho) \quad i=1,2,\dots n.
\end{equation*}
As a result, all the data have been classified correctly at iteration $T_1$ and the following inequality holds
\begin{equation*}
    f(W_{T_1},\tilde{x}_i)\tilde{y}(i)\geq \frac{1-2\rho}{2},\quad i=1,2,\dots n.
\end{equation*}
For the next step, we use induction to prove that
\begin{equation}\label{conclusion1}
    h(f_t)=\tilde{y},\Vert \bar{r}_t\Vert_2\leq\frac{1-2\rho}{4}, \quad t\geq T_1.
\end{equation}
For $t=T_1$, it has already been proved. We assume the claim is correct for arbitrary $t$, we establish the induction for $t+1$. By applying projection to $\cS_+$ on equation \ref{equ1}, we have
\begin{equation*}
    f(W_{t+1},\tilde{X})-\bar{y}_t=(I-\eta G_t)\bar{r}_t.
\end{equation*}
By applying lemma \ref{eignlemma}, we have 
\begin{equation*}
    \Vert f(W_{t+1},\tilde{X})-\bar{y}_t\Vert_2\leq\Vert\bar{r}_t\Vert_2\leq \frac{1-2\rho}{4}.
\end{equation*}
Given that $h(f_t)=\tilde{y}$, we have
\begin{equation*}
    \bar{y}_t(i)\tilde{y}(i)\geq 1-2\rho, \quad i=1,2,\dots, n.
\end{equation*}
Thus, for $f_{t+1}$ we have
\begin{equation*}
    f(W_{t+1},\tilde{x}_i)\tilde{y}_i\leq \frac{3}{4}(1-2\rho),
    \quad i=1,2,\dots,n.
\end{equation*}
By the definition of $h(\cdot)$, we deduce that $h(f_{t+1})=\tilde{y}$. Back to equation \ref{equ2}, we have
\begin{align}\label{equ4}
    \Vert \bar{r}_{t+1}\Vert_2\leq& \Vert(I-\eta \bfG_t)\bar{r}_t\Vert_2+(1-\alpha_t)\Vert h(f(W_t,\tilde{X}))-h(f(W_{t+1},\tilde{X}))\Vert_2\\
    &+(\alpha_t-\alpha_{t+1})\Vert \bar{y}-h(f(W_{t+1},\tilde{X}))\Vert_2\\
    \leq& (1-\frac{\eta\alpha^2}{2})\Vert \bar{r}_t\Vert_2+2\sqrt{n}(\alpha_t-\alpha_{t+1})\\
    \leq& (1-\frac{\eta\alpha^2}{2})\Vert \bar{r}_t\Vert_2+C_1\\
    \leq& (1-\frac{\eta\alpha^2}{2})\frac{1-2\rho}{4}+\frac{\eta\alpha^2}{32}(1-2\rho)\\
    \leq& \frac{1-2\rho}{4}
\end{align}

Finally, we estimate the total variation of the parameter. Combining equation \ref{equ3} and \ref{conclusion1}, we can conclude that inequality
\begin{equation*}
    \Vert \bar{r}_{t+1}\Vert_2\leq (1-\frac{\eta\alpha^2}{4})\Vert \bar{r}_t\Vert_2+2\sqrt{n}(\alpha_t-\alpha_{t+1})
\end{equation*}
holds for all $t\geq 0$. After taking sum on both sides for $t=0,1,2,\dots$, we have
\begin{equation*}
    \sum\limits_{t=1}^{\infty}\Vert \bar{r}_{t}\Vert_2\leq (1-\frac{\eta\alpha^2}{4})\sum\limits_{t=0}^{\infty}\Vert \bar{r}_{t}\Vert_2+2\sqrt{n}.
\end{equation*}
By simple calculation, we get
\begin{equation*}
    \sum\limits_{t=0}^{\infty}\Vert \bar{r}_{t}\Vert_2\leq \frac{4\Vert\bar{r}_0\Vert_2+8\sqrt{n}}{\eta\alpha^2}.
\end{equation*}
Combining equation \ref{equ5}, we have
\begin{equation*}
    \sum\limits_{t=0}^{\infty}\Vert W_{t+1}-W_t\Vert_F\leq \eta\beta \sum\limits_{t=0}^{\infty}\Vert \bar{r}_{t}\Vert_2\leq \frac{\beta}{\alpha^2}(4\Vert\bar{r}_0\Vert_2+8\sqrt{n}).
\end{equation*}

\end{proof}

%Note that the definition of $\alpha_b$ before and $\alpha_b$ is not the same. This difference has impact on the condition of $\alpha_{T_1}$. 

\subsubsection{Second Stage: Enlarge The Margin}

Then we further to adapt the above theorem to the $\epsilon$-clusterable dataset by a pertubation analysis. Since we have a simple inequality $\alpha_a\geq \alpha_b$, in the following discussion, we simply replace the $\alpha_a$ in the previous conclusion by $\alpha_b$, the conclusion also holds. 

%However, the definition of $\alpha_a$ is for the special condition for $C_1$.

\begin{lemma}\label{perana}
Let $\{x_i\}_{i=1}^n$ be a $\epsilon$-clusterable dataset and $\{\tilde{x_i}\}_{i=1}^n$ be the associated cluster centers, that is, $\tilde{x_i}=c_l$ iff $x_i$ is from $l$th cluster. We denote the data matrix $X$ and $\tilde{X}$. For the same initialization $W_0=\tilde{W}_0$, we run the self-distillation algorithm on $X$ and $tX$ respectively. We denote the parameter matrix $W_t$ and $\tilde{W}_t$ for $t\geq0$. We denote $\alpha=\sqrt{\frac{c_{low}n\Lambda}{8K}}$,$\beta=\Gamma \sqrt{n}$ and $L=\frac{\Gamma\sqrt{n}}{\sqrt{k}}$. We set the learning rate $\eta=\min(\frac{1}{2\beta^2},\frac{\alpha}{L\beta\Theta})$, where $\Theta$ is maximum of the residual norm during the optimization. We denote $c\sqrt{k}$ the upper bound of the Frobenius norm of the parameter matrxi and we set $M=c+1$. We set $ T_2=\inf\{t:\alpha_t<\frac{1}{24\sqrt{n}}\}$. Then if the following conditions hold
\begin{align*}
    \epsilon&\leq \frac{1-2\rho}{4M\eta\beta(2+L_h)\Gamma\sqrt{n}T_2(1+\Theta)}\\
    k&\geq 4\eta^2\Gamma^2\Theta^2 n(2\beta^2(2+L_h)^2T_2+1)^2T_2^2,
\end{align*}
we have
\begin{align*}
    \Vert f(W_t,X)-f(\tilde{W}_t,\tilde{X})\Vert_2&\leq 4\eta\beta(2+L_h)\Theta M\Gamma \sqrt{n}\epsilon t\\
    \Vert W_t-\tilde{W}_t\Vert_F&\leq 2\eta M\Gamma \Theta\sqrt{n}(2\beta^2(2+L_h)^2T_2+1)\epsilon t
\end{align*}
for all $t\leq T_2$.
\end{lemma}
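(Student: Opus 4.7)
The plan is to perform a joint induction on $t$ that simultaneously tracks the parameter discrepancy $\Delta W_t := W_t - \tilde W_t$ and the function discrepancy $\Delta f_t := f(W_t,X)-f(\tilde W_t,\tilde X)$, showing each stays within its claimed linear-in-$t$ bound for all $t\le T_2$. Throughout I would maintain as invariants that both parameter matrices stay inside the ball $\{W:\Vert W\Vert \le c\sqrt{k}\}$ on which Lemma~\ref{lemma3} applies with $M=c+1$, that their Jacobian norms are bounded by $\beta = \Gamma\sqrt{n}$ (Lemma~\ref{lemma2}), and that the residual norms stay bounded by $\Theta$ (by hypothesis). The average-Jacobian identity from the preceding section then yields the one-step evolutions
\begin{align*}
\Delta W_{t+1} &= \Delta W_t - \eta\bigl(\bfJ(W_t,X)^\top r_t - \bfJ(\tilde W_t,\tilde X)^\top \tilde r_t\bigr),\\
\Delta f_{t+1} &= \Delta f_t - \eta\bigl(\bfJ(W_{t+1},W_t,X)\bfJ(W_t,X)^\top r_t - \bfJ(\tilde W_{t+1},\tilde W_t,\tilde X)\bfJ(\tilde W_t,\tilde X)^\top \tilde r_t\bigr).
\end{align*}

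In each bracket I would add and subtract mixed terms, splitting the expression into a Jacobian-difference piece acting on a common residual, plus a common-Jacobian piece acting on a residual difference. The Jacobian-difference pieces are controlled by Lemma~\ref{lemma4} (single Jacobian) and Lemma~\ref{lemma3} (average Jacobian), both bounded by $\frac{\Gamma\sqrt{n}}{\sqrt{k}}$ times parameter differences plus $M\Gamma\sqrt{n}\,\epsilon$. The residual difference $r_t-\tilde r_t$ is expanded through the label update $\hat y_{i,t}=\alpha_t y+(1-\alpha_t)h(f(W_t,X))$: the cut-off function $h$ chosen in the theorem is $L_h$-Lipschitz with $L_h=4/(1-2\rho)$, giving $\Vert y_t-\tilde y_t\Vert \le L_h\Vert\Delta f_t\Vert$ and hence $\Vert r_t-\tilde r_t\Vert \le (1+L_h)\Vert\Delta f_t\Vert$. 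Assembling these bounds produces a coupled recursion of the schematic form
\begin{align*}
\Vert\Delta W_{t+1}\Vert_F &\le \bigl(1+\tfrac{\eta\beta\Theta}{\sqrt{k}}\bigr)\Vert\Delta W_t\Vert_F + \eta\beta(1+L_h)\Vert\Delta f_t\Vert + \eta\beta\Theta M\Gamma\sqrt{n}\,\epsilon,\\
\Vert\Delta f_{t+1}\Vert &\le (1+\eta\beta^2(1+L_h))\Vert\Delta f_t\Vert + \tfrac{\eta\beta^2\Theta}{2\sqrt{k}}\bigl(\Vert\Delta W_t\Vert_F+\Vert\Delta W_{t+1}\Vert_F\bigr) + \eta\beta^2\Theta M\Gamma\sqrt{n}\,\epsilon.
\end{align*}

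The hypotheses on $\epsilon$ and $k$ are calibrated precisely so that for $t\le T_2$ the compounding factor $(1+\eta\beta^2(1+L_h))^t$ stays $O(1)$, the $1/\sqrt{k}$ contributions are dominated by the explicit $M\epsilon$ additive terms, and the additive increments accumulate at most linearly in $t$. Plugging the inductive hypotheses $\Vert\Delta W_t\Vert_F \le C_W\epsilon t$ and $\Vert\Delta f_t\Vert \le C_f\epsilon t$ with $C_f = 4\eta\beta(2+L_h)\Theta M\Gamma\sqrt{n}$ and $C_W = 2\eta M\Gamma\Theta\sqrt{n}(2\beta^2(2+L_h)^2 T_2+1)$ into these recursions, and carefully bookkeeping the constants, closes the induction.

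The main technical obstacle is the forward coupling between the two recursions: $\Vert\Delta W_{t+1}\Vert_F$ appears on the right-hand side of the $\Delta f_{t+1}$ bound through the average Jacobian at the next parameter pair. I would resolve this by first establishing the $(t+1)$-step bound on $\Vert\Delta W_{t+1}\Vert_F$ using only quantities at time $t$ (strictly from the inductive hypothesis), and then substituting the freshly proven bound into the $\Delta f$ recursion. The upgrade from the natural prefactor $(1+L_h)$ to the announced $(2+L_h)$ absorbs exactly this two-step propagation, together with a small label-shift term $(\alpha_t-\alpha_{t+1})$ suppressed by the slow-decay hypothesis on $\alpha_t$. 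A final consistency check is that the resulting bound $\Vert\Delta W_t\Vert_F = O(\epsilon T_2^2)$ keeps both parameter paths inside the $c\sqrt{k}$-ball on which Lemmas~\ref{lemma1}--\ref{lemma3} remain valid; this is precisely what the hypothesis $\epsilon = O((1-2\rho)^2/(\sqrt{nd}\,T_2\log(1/\delta)))$ guarantees.
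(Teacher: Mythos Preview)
Your overall inductive structure and the Jacobian-perturbation decomposition via Lemmas~\ref{lemma3} and~\ref{lemma4} match the paper's. The gap is in the recursion for $\Vert\Delta f_{t+1}\Vert$. You write the one-step factor as $(1+\eta\beta^2(1+L_h))$ and then assert that ``the hypotheses on $\epsilon$ and $k$ are calibrated precisely so that $(1+\eta\beta^2(1+L_h))^t$ stays $O(1)$ for $t\le T_2$.'' This is false: since $\eta\le \tfrac{1}{2\beta^2}$ always holds, the factor is at most $1+\tfrac{1+L_h}{2}$, but with $L_h=4/(1-2\rho)\ge 4$ it is at least $3.5$, so the product is genuinely exponential in $t$. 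Nothing in the hypotheses of the lemma bounds $T_2$ from above (indeed the slow-decay condition on $\alpha_t$ can make $T_2$ arbitrarily large), so your induction cannot close with these constants.

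The paper avoids this blow-up by not bounding $\Vert r_t-\tilde r_t\Vert$ crudely by $(1+L_h)p_t$. Instead it decomposes $f(W_{t+1},X)-f(\tilde W_{t+1},\tilde X)$ so that the leading term is $(I-\eta\,\tilde{\bfJ}_{t+1,t}\tilde{\bfJ}_t^{\top})\bigl(f(W_t,X)-f(\tilde W_t,\tilde X)\bigr)$, which carries a \emph{contraction} factor $(1-\tfrac{\eta\alpha^2}{2})$ from Lemma~\ref{eignlemma}, and isolates the label discrepancy as the separate term $\eta\beta^2(1-\alpha_t)\Vert h(f(W_t,X))-h(f(\tilde W_t,\tilde X))\Vert_2$. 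This term is then killed in two regimes: for $t<T_1$ the first-stage condition $\alpha_t\ge 1-\tfrac{\alpha^2}{4\beta^2 L_h}$ makes $(1-\alpha_t)L_h\eta\beta^2\le \tfrac{\eta\alpha^2}{4}$, so the net coefficient on $p_t$ is still $\le 1$; for $T_1\le t\le T_2$ the inductive bound $p_t\le \tfrac{1-2\rho}{4}$ forces $h(f(W_t,X))=h(f(\tilde W_t,\tilde X))=\tilde y$ (both are in the saturation zone of $h$), so the term vanishes outright. In either case $p_{t+1}\le p_t+(\text{additive perturbation})$, which is what gives the linear-in-$t$ growth. Without this two-regime argument exploiting both the contraction and the flat part of $h$, the induction does not close. (As a minor point, the $(\alpha_t-\alpha_{t+1})$ term you mention does not actually appear here: both trajectories use the same $\alpha_t$ and the same noisy label vector $y$, so $y_t-\tilde y_t=(1-\alpha_t)\bigl(h(f(W_t,X))-h(f(\tilde W_t,\tilde X))\bigr)$.)
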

\begin{proof}
We introduce the following notations.
\begin{align*}
    &r_t=f(W_t,X)-y_t, \Tr_t=f(\tilde{W}_t,X)-\ty_t \\
    &\bfJ_t=\bfJ_t(W_t,X), \tilde{\textbf{J}}_t=\tilde{\textbf{J}}_t(\tilde{W}_t,\tilde{X})\\
    &\bfJ_{t+1,t}=\bfJ(W_{T+1},W_t,X), \tilde{\textbf{J}}_{t+1,t}=\bfJ(\tilde{W}_{T+1},\tilde{W}_t,\tilde{X})\\
    &d_t=\Vert W_t-\tilde{W}_t\Vert_F, p_t=\Vert f(W_t,X)-f(\tilde{W}_t,\tilde{X})\Vert_2.
\end{align*}
\begin{comment}

We follow the definition of $\beta$ and $L$ in lemma \ref{meta}. We define $\alpha=\sqrt{\frac{c_{low}n\Lambda}{8K}}$, where $\Lambda=min(\lambda(C),\lambda(X))$. Recall the definition of $c$ in lemma \ref{lemma3}, we consider $c\sqrt{k}$ the upper bound of $\Vert \tilde{W}_t\Vert$ for all $t\geq0$. Combining lemma \ref{meta} and concentration inequality, we set a constant $M$ independent of $k$ such that $M\geq c+1$ holds with probability $1-\delta$ for sufficient large $k$. Also, we set a constant $\Theta$ independent of k such that $\max_{t\geq 0}\Vert \Tr_t\Vert_2\leq \Theta$ holds with probability $1-\delta$ for sufficient large $k$.  \textbf{TBD: when to introduce randomness?}
\end{comment}
We can conclude the following inequalities from lemma \ref{lemma4} and lemma \ref{lemma3}
\begin{align*}
    &\Vert \bfJ_t-\tilde{\textbf{J}}_t\Vert\leq Ld_t+M\Gamma\sqrt{n}\epsilon,\\
    &\Vert \bfJ_{t+1,t}-\tilde{\textbf{J}}_{t+1,t}\Vert\leq L\frac{d_t+d_{t+1}}{2}+M\Gamma\sqrt{n}\epsilon.
\end{align*}

Thus the parameters are updated by gradient descent, we have
\begin{align*}
    d_{t+1}=\Vert W_{t+1}-\tilde{W}_{t+1}\Vert_F&\leq\Vert W_t-\tilde{W}_t\Vert_F+\Vert\eta\bfJ_t^Tr_t-\eta\tilde{\textbf{J}}_t^T\Tr_t\Vert_2\\
    &\leq d_t+\eta\Vert \bfJ_t-\tilde{\textbf{J}}_t\Vert\Vert \Tr_t\Vert_2+\eta\Vert \bfJ_t\Vert \Vert r_t-\Tr_t\Vert_2\\
    &\leq d_t+\eta(L\Theta d_t+M\Gamma\Theta\sqrt{n}\epsilon+\beta(1+L_h) p_t)
\end{align*}
Also, we have
\begin{align*}
    p_{t+1}=&\Vert f(W_{t+1},X)-f(\tilde{W}_{t+1},\tilde{X}))\Vert_2\\
    \leq&\Vert f(W_t,X)-f(\tilde{W}_t,\tilde{X})-\eta \tilde{\textbf{J}}_{t+1,t}\tilde{\textbf{J}}_t^T(r_t-\Tr_t)\Vert_2\\
    &+\eta\Vert (\bfJ_{t+1,t}-\tilde{\textbf{J}}_{t+1,t})\bfJ_t^T r_t\Vert_2+\eta\Vert \tilde{\textbf{J}}_{t+1,t}(\bfJ_t^T-\tilde{\textbf{J}}_t^T)r_t\Vert_2\\
    \leq&\Vert f(W_t,X)-f(\tilde{W}_t,\tilde{X})-\eta \tilde{\textbf{J}}_{t+1,t}\tilde{\textbf{J}}_t^T(r_t-\Tr_t)\Vert_2\\
    &+\eta\beta \Vert r_t\Vert_2(L\frac{3d_t+d_{t+1}}{2}+2M\Gamma\sqrt{n}\epsilon)\\
    \leq&\Vert (1-\eta \tilde{\textbf{J}}_{t+1,t}\tilde{\textbf{J}}_t^T)(f(W_t,X)-f(\tilde{W}_t,\tilde{X}))\Vert_2+\eta\Vert\tilde{\textbf{J}}_{t+1,t}\tilde{\textbf{J}}_t^T(y_t-\ty_t)\Vert_2\\
    &+\eta\beta \Vert r_t\Vert_2(L\frac{3d_t+d_{t+1}}{2}+2M\Gamma\sqrt{n}\epsilon)\\
    \leq&(1-\frac{\eta\alpha^2}{2})p_t+\eta\beta^2(1-\alpha_t)\Vert h(f(W_t,X))-h(f(\tilde{W}_t,\tilde{X}))\Vert_2\\
    &+\eta\beta \Vert r_t\Vert_2(L\frac{3d_t+d_{t+1}}{2}+2M\Gamma\sqrt{n}\epsilon)\\
\end{align*}
If $p_t\leq \frac{1-2\rho}{4}$ holds for $t\leq T_2$, then for $T_1\leq t\leq T_2$, we have $f(W_t,X)=\ty$. Under such circumstance, we have $\Vert h(f(W_t,X))-h(f(\tilde{W}_t,\tilde{X}))\Vert_2=0$. For $t<T_1$, we have
\begin{align*}
    &(1-\frac{\eta\alpha^2}{2})p_t+\eta\beta^2(1-\alpha_t)\Vert h(f(W_t,X))-h(f(\tilde{W}_t,\tilde{X}))\Vert_2\\
    \leq & (1-\frac{\eta\alpha^2}{2})p_t+ \eta \beta^2(1-\alpha_t)L_hp_t\\
    \leq & (1-\frac{\eta\alpha^2}{2})p_t+ \eta \beta^2\frac{\alpha^2}{4\beta^2L_h}L_hp_t\\
    \leq & p_t.
\end{align*}
To sum up, if we can guarantee that $p_t\leq \frac{1-2\rho}{4}$ holds for $t\leq T_2$, we have
\begin{equation*}
    p_{t+1}\leq p_t+\eta\beta \Vert r_t\Vert_2(L\frac{3d_t+d_{t+1}}{2}+2M\Gamma\sqrt{n}\epsilon).
\end{equation*}
For $\Vert r_t\Vert_2$, we have 
\begin{align*}
    \Vert r_t\Vert_2\leq& \Vert \Tr_t\Vert_2+\Vert r_t-\Tr_t\Vert_2\\
    \leq& \Vert \Tr_t\Vert_2 + (1+L_h)\Vert f(W_t,X)-f(\tilde{W}_t,\tilde{X})\Vert_2\\
    \leq& \Vert \Tr_t\Vert_2+(1+L_h)p_t\\
    \leq& \Theta+(1+L_h)p_t.
\end{align*}
Thus we have the following inequality for $p_t$
\begin{equation*}
    p_{t+1}\leq p_t+\eta\beta (\Theta+(1+L_h)p_t)(L\frac{3d_t+d_{t+1}}{2}+2M\Gamma\sqrt{n}\epsilon).
\end{equation*}
We claim that if the following conditions for $\epsilon$ and $k$ hold
\begin{align*}
    \epsilon&\leq \frac{1-2\rho}{4M\eta\beta(2+L_h)\Gamma\sqrt{n}T_2(1+\Theta)}\\
    k&\geq 4\eta^2\Gamma^2\Theta^2 n(2\beta^2(2+L_h)^2T_2+1)^2T_2^2,
\end{align*}
one can show that 
\begin{align*}
    p_t&\leq 4\eta\beta(2+L_h)\Theta M\Gamma \sqrt{n}\epsilon t\\
    d_t&\leq 2\eta M\Gamma \Theta\sqrt{n}(2\beta^2(2+L_h)^2T_2+1)\epsilon t
\end{align*}
for $t\leq T_2$ by induction. It is obviously when $t=0$. We further suppose the inequalities hold for an arbitrary $t$ satisfying $t< T_2$, we have

\begin{align*}
    d_{t+1}&\leq d_t+\eta(L\Theta d_t+M\Gamma\Theta\sqrt{n}\epsilon+\beta(1+L_h) p_t)\\
    &\leq d_t +\eta(2M\Gamma\Theta\sqrt{n}\epsilon+ 4\eta\beta(2+L_h)\Theta M\Gamma \sqrt{n}\epsilon t\beta(1+L_h))\\
    &\leq d_t +\eta(2M\Gamma\Theta\sqrt{n}\epsilon+ 4\eta\beta^2(2+L_h)^2\Theta M\Gamma \sqrt{n}\epsilon T_2)\\
    & \leq 2\eta M\Gamma \Theta\sqrt{n}(2\beta^2(2+L_h)^2T_2+1)\epsilon (t+1)
\end{align*}
because of the condition on $k$ ensures that $L\Theta d_t\leq M\Gamma \Theta\sqrt{n}\epsilon$.
When it comes to $p_{t+1}$, we have

\begin{align*}
    p_{t+1}&\leq p_t+\eta\beta (\Theta+(1+L_h)p_t)(L\frac{3d_t+d_{t+1}}{2}+2M\Gamma\sqrt{n}\epsilon)\\
    &\leq p_t+\eta\beta(\Theta+(1+L_h)\Theta)(4L\eta M\Gamma \Theta\sqrt{n}(2\beta^2(2+L_h)^2T_2+1)\epsilon (t+1)+2M\Gamma\sqrt{n}\epsilon)\\
    &\leq p_t+\eta\beta(\Theta+(1+L_h)\Theta)(2Ld_{T_2}+2M\Gamma\sqrt{n}\epsilon)\\
    &\leq p_t +\eta\beta(2+L_h)\Theta(2M\Gamma\sqrt{n}\epsilon+2M\Gamma\sqrt{n}\epsilon)\\
    &\leq p_t+4\eta\beta(2+L_h)\Theta M\Gamma \sqrt{n}\epsilon\\
    &\leq 4\eta\beta(2+L_h)\Theta M\Gamma \sqrt{n}\epsilon (t+1)
\end{align*}
because of the conditions on $\epsilon$ and $k$ ensure that $Ld_{T_2}\leq M\Gamma \sqrt{n}\epsilon$ and $p_t\leq \Theta$.

\end{proof}

Now we are ready to finalize the proof of our main theorem.

\begin{proof}
We denote $C_2\triangleq\max\limits_{s\geq T_2} 2\sqrt{n}(\alpha_s-\alpha_{s+1})$. We take $\Theta=(C_3\Gamma\sqrt{\log\frac{8}{\delta}}+1)\sqrt{n}$. $C_3$ is the constant in Hoeffding's inequality.  By lemma \ref{meta}, we have $\Theta\geq \max_{t\geq 0}\Vert \bar{\Tr}_t\Vert_2$ with probability $1-\frac{\delta}{4}$. Combining lemma \ref{meta} and \ref{perana}, we have  $f(W_t,X)=\ty$ for $T_1\leq t\leq T_2$ and
\begin{align*}
    \Vert r_{T_2}\Vert_2&=\Vert f(W_{T_2},X)-y_{T_2}\Vert_2\\
    &\leq \Vert f(W_{T_2},X)-f(\tilde{W}_{T_2},\tilde{X})\Vert_2+\Vert f(\tilde{W}_{T_2},\tilde{X})-\bar{\ty}_{T_2}\Vert_2+\Vert \bar{\ty}_{T_2}-y_{T_2}\Vert_2\\
    &\leq \Vert f(W_{T_2},X)-f(\tilde{W}_{T_2},\tilde{X})\Vert_2+\Vert \bar{\Tr}_{T_2}\Vert_2+\alpha_{T_2}\Vert \bar{y}-y\Vert_2\\
    &\leq \frac{1-2\rho}{4}+\frac{1-2\rho}{4}+\frac{1}{24}
\end{align*}

Similar to the proof in \ref{meta}, we consider the gradient descent on original dataset $X$ after $T_2$. We proof the following claim by induction
\begin{align*}
    h(f(W_s), X)=\ty, \Vert r_s\Vert_2\leq \frac{5}{8}(1-2\rho)+\frac{1}{24}  , \quad s\geq T_2.
\end{align*}
For $s=T_2$, it has already been proved. We assume the claim is correct for arbitrary $s$, we establish the induction for $s+1$.
By equation \ref{equ1}, we have
\begin{equation*}
    f(W_{s+1},\tilde{X})-y_s=(I-\eta G_s)r_s.
\end{equation*}
By applying lemma \ref{eignlemma}, we have 
\begin{equation*}
    \Vert f(W_{s+1},\tilde{X})-\bar{y}_s\Vert_2\leq\Vert r_s\Vert_2\leq \frac{5}{8}(1-2\rho)+\frac{1}{24}.
\end{equation*}
Given that $h(f(W_s, X))=\tilde{y}$, we have
\begin{equation*}
    y_s(i)\tilde{y}(i)\geq 1-2\alpha_{T_2}, \quad i=1,2,\dots, n.
\end{equation*}
Thus, for $f(W_{s+1},x_i)$ we have
\begin{equation*}
    f(W_{s+1},x_i)\ty(i)\geq 1-2\alpha_{T_2}-\frac{5}{8}(1-2\rho)-\frac{1}{24}\geq\frac{1-2\rho}{4}.
\end{equation*}
As a result, we have $f(W_{s+1},X)=\ty$. Furthermore, we can bound $\Vert r_{s+1}\Vert_2$ by
\begin{align*}
    \Vert r_{s+1}\Vert_2\leq& \Vert(I-\eta \bfG_s)r_s\Vert_2+(1-\alpha_s)\Vert h(f(W_s,X))-h(f(W_{s+1},X))\Vert_2\\
    &+(\alpha_s-\alpha_{s+1})\Vert \bar{y}-h(f(W_{s+1},X))\Vert_2\\
    \leq & (1-\frac{\eta\alpha^2}{2})\Vert r_s\Vert_2+C_2\\
    \leq & (1-\frac{\eta\alpha^2}{2})(\frac{5}{8}(1-2\rho)+\frac{1}{24})+\frac{\eta\alpha^2}{32}(1-2\rho)\\
    \leq &\frac{5}{8}(1-2\rho)+\frac{1}{24}.
\end{align*}
To sum up, we have the following inequality holds for all $s\geq T_2$
\begin{equation*}
    \Vert r_{s+1}\Vert_2\leq (1-\frac{\eta\alpha^2}{2})\Vert r_s\Vert_2+2\sqrt{n}(\alpha_s-\alpha_{s+1}).
\end{equation*}
After taking sum on both sides for $s\geq T_2$, we have
\begin{equation*}
    \sum\limits_{s=T_2}^{\infty} \Vert r_s\Vert_2\leq \frac{2\Vert r_{T_2}\Vert_s+\frac{1}{6}}{\eta\alpha^2}\leq \frac{\frac{5}{4}-2\rho}{\eta\alpha^2}.
\end{equation*}
Furthermore, we have 
\begin{equation*}
    \sum_{s=T_2}^{\infty} \Vert W_{s+1}-W_s\Vert_F\leq\eta\beta\sum_{s=T_2}^{\infty}\Vert r_s\Vert_2\leq\frac{\beta}{\alpha^2}(\frac{5}{4}-2\rho).   
\end{equation*}
Finally, we check all the conditions for $\alpha$ and $\eta$.

For $\eta$, we require $\eta\leq\frac{1}{2\beta^2}$ and $\eta\leq\frac{\alpha}{L\beta}\min\limits_{t\geq 0, s\geq T_2}(\frac{1}{\Vert \bar{\Tr}_t\Vert_2},\frac{1}{\Vert r_s\Vert_2})$. For $k\geq\frac{2n(C_3\Gamma\sqrt{\log\frac{8}{\delta}}+1)^2 K}{c_{low}\Lambda}=O(\frac{nK\log \frac{1}{\delta}}{c_{low}\Lambda})$, we have
\begin{equation*}
    \frac{\alpha}{L\beta}\min\limits_{t\geq 0, s\geq T_2}(\frac{1}{\Vert \bar{\Tr}_t\Vert_2},\frac{1}{\Vert r_s\Vert_2})\geq \frac{1}{2\beta^2}.
\end{equation*}
On such condition, we can choose $\eta=\frac{1}{2\beta^2}=\frac{1}{2\Gamma^2n}$.
The distance between the intermediate parameter matrix and the initial parameter matrix can be bounded by
\begin{align}
    R=\max\limits_{t\geq0,s\geq0}(\Vert W_s-W_0\Vert_F,\Vert \tilde{W}_t-\tilde{W}_0\Vert_F)&\leq \frac{\beta}{\alpha^2}(4\Vert\bar{r}_0\Vert_2+8\sqrt{n})+d_{T_2}+
    \frac{\beta}{\alpha^2}(\frac{5}{4}-2\rho)\\
    &\leq \frac{8K\Gamma}{c_{low}\Lambda}(4C_3\Gamma\sqrt{\log\frac{8}{\delta}}+13)+d_{T_2}.\label{R}
\end{align}
By lemma \ref{lemma1} and lemma \ref{lemma2}, as long as $k\geq\frac{20\Gamma^2n\log\frac{4n}{\delta}}{\Lambda}$, with probability $1-\frac{\delta}{2}$, we have
\begin{equation*}
    \sigma_{min}(\bfJ(W_0,X)),\sigma_{min}(\bfJ(W_0,\tilde{X}),\cS_+)\geq 2\alpha
\end{equation*}

To ensure $\alpha$ lower bounding the eigenvalue of the gram matrix, we need to verify that 
\begin{equation*}
    RL=R\frac{\Gamma\sqrt{n}}{\sqrt{k}}\leq \alpha
\end{equation*}
That is to say
\begin{equation}
   k\geq \frac{8K\Gamma^2 R^2}{c_{low}\Lambda}\label{k}
\end{equation}
Another condition related to $R$ is the condition on $M$. We require
\begin{equation*}
    (M-1)\sqrt{k}\geq \Vert W_0\Vert_F+R.
\end{equation*}
By Bernstein's inequality, we have 
\begin{equation*}
    \Vert W_0\Vert_F\leq \sqrt{d+C_4\log\frac{8}{\delta}}\sqrt{k}
\end{equation*}
with probability $1-\delta/4$. On the condition that $k\geq R^2$(acutually one can show that $\frac{8K\Gamma^2 R^2}{c_{low}\Lambda}\geq R^2$), we can choose $M=\sqrt{d+C_4\log\frac{8}{\delta}}+2$.

We take these constants to the lemma \ref{perana}. Firstly, for $\epsilon$ we have 
\begin{align*}
    \epsilon&\leq \frac{1-2\rho}{4M\eta\beta(2+L_h)\Gamma\sqrt{n}T_2(1+\Theta)}\\
    &= \frac{1-2\rho}{2(\sqrt{d+C_4\log\frac{8}{\delta}}+2)(2+\frac{4}{1-2\rho})(1+(C_3\Gamma\sqrt{\log\frac{8}{\delta}}+1)\sqrt{n})}\\
    &=O(\frac{(1-2\rho)^2}{\sqrt{nd}T_2\log\frac{1}{\delta}}).
\end{align*}
For $k$ we have
\begin{align*}
    k&\geq 4\eta^2\Gamma^2\Theta^2 n(2\beta^2(2+L_h)^2T_2+1)^2T_2^2\\
    &=2\Theta^2(2\Gamma^2 n(2+\frac{4}{1-2\rho})^2T_2+1)^2T_2^2\\
    &=\frac{n^3T_2^4\log \frac{1}{\delta}}{(1-2\rho)^2}
\end{align*}
We point out that
\begin{equation*}
    O(\frac{n^3T_2^4\log \frac{1}{\delta}}{(1-2\rho)^2})\geq O(\frac{nK\log \frac{1}{\delta}}{c_{low}\Lambda}).
\end{equation*}
Secondly we bound $d_{T_2}$ by
\begin{align}
    d_{T_2}&\leq 2\eta M\Gamma \Theta\sqrt{n}(2\beta^2(2+L_h)^2T_2+1)\epsilon T_2\\
    &\leq \frac{(1-2\rho)2\eta M\Gamma \Theta\sqrt{n}(2\beta^2(2+L_h)^2T_2+1)T_2}{4M\eta\beta(2+L_h)\Gamma\sqrt{n}T_2(1+\Theta)}\\
    &\leq(1-2\rho)(\beta(2+L_h)T_2+\frac{1}{2\beta(2+L_h)}) \\
    & =O(\sqrt{n}T_2). \label{dt}
\end{align}
We combine equation \ref{R}, \ref{k} and \ref{dt}, we deduce the last condition for $k$
\begin{equation*}
    k=\Omega\left(\max\left\{\frac{K^3}{c^3_{low}\Lambda^3}\log\frac{1}{\delta},\frac{nKT_2}{c_{low}\Lambda}\right\}\right)
\end{equation*}

To sum up, we require $k$ to satisfy
\begin{equation*}
    k=\Omega\left(\max\left\{\frac{K^3}{c^3_{low}\Lambda^3}\log\frac{1}{\delta},\frac{nKT_2}{c_{low}\Lambda},\frac{n^3T_2^4}{(1-2\rho)^2}\log \frac{1}{\delta},\frac{n}{\Lambda}\log\frac{n}{\delta}\right\}\right)
\end{equation*}
 and require $\epsilon$ to satisfy
 \begin{equation*}
    \epsilon=O\left( \frac{(1-2\rho)^2}{\sqrt{nd}T_2\log\frac{1}{\delta}}  \right)
\end{equation*}
We substitute $C_1$ and $C_2$ by the value of $\alpha$ in lemma \ref{meta} and lemma \ref{perana} respectively and then get the condition for $\alpha_t$

\begin{itemize}
    \item $\max\limits_{t< T_2} 2\sqrt{n}(\alpha_t-\alpha_{t+1})\leq \frac{c_{low}\lambda(C)}{512\Gamma^2 K}(1-2\rho)$,\;\;\;$\max\limits_{s\geq T_2} 2\sqrt{n}(\alpha_s-\alpha_{s+1})\leq \frac{c_{low}\Lambda}{512\Gamma^2 K}(1-2\rho)$,
    \item $\alpha_{T_1}\geq\max(1-\frac{c_{low}\lambda(C)}{128\Gamma^2 K}(1-2\rho),\frac{\frac{7}{4}-\frac{3}{2}\rho}{2-2\rho})$.
\end{itemize}

Also, we substitute $T_1$ by the value of $\alpha$ in lemma \ref{meta}, and combine the fact that $\frac{\eta\alpha^2}{4}=\frac{\alpha^2}{8\beta^2}\leq \frac{1}{8}$, we have
\begin{align*}
    T_1&=
\log_{1-\frac{\eta \alpha^2}{4}}\frac{1-2\rho}{8\Vert \bar{r}_0\Vert_2}\\
&\leq \frac{\log \frac{8\Vert \bar{r}_0\Vert_2}{1-2\rho}}{\log \frac{1}{1-\frac{\eta \alpha^2}{4}}}\\
&\leq \frac{5}{\eta\alpha^2}\log \frac{8\Vert \bar{r}_0\Vert_2}{1-2\rho}\\
&\leq \lceil
\frac{80\Gamma^2K}{c_{low}\lambda(C)}\log( \frac{\Gamma\sqrt{32n\log\frac{8}{\delta}}}{1-2\rho} )\rceil.
\end{align*}

\end{proof}

\newpage

\section{Experiment Detials}

\paragraph{Experiment In Section2.1} The network structure of the small student network is demonstrated in Table \ref{table:student}.

\begin{table}[h]
\centering
\begin{tabular}{cccccc}
\hline
\textbf{Type}  &\textbf{Kernel}     & \textbf{Dilation} &\textbf{Stride}     & \textbf{Outputs} & \textbf{Remark}  \\
\hline\hline
conv. & 3$\times$3 & 1        & 1$\times$1 &    32     & bn      \\ \hline
maxpool.&2$\times$2&-         &-           &   -       &           \\ \hline
conv. & 3$\times$3 & 1        & 1$\times$1 &    64     & bn      \\ \hline
maxpool.&2$\times$2&-         &-           &   -       &           \\ \hline
conv. & 3$\times$3 & 1        & 1$\times$1 &    128     & bn      \\ \hline
maxpool.&2$\times$2&-         &-           &   -       &           \\ \hline
\multicolumn{6}{c}{Flatten}                                    \\ \hline
fc.    & -          & -        & -          &   512      & bn      \\ \hline
fc.    & -          & -        & -          &    10     & dropout\\ \hline
\end{tabular}
\caption{Architecture of the student network. After each convolution layer, there is a Rectified Linear Unit(ReLU) layer.}
\label{table:student}
\end{table}

\paragraph{Experiment In Section2.3 and Section3.5}
For these two experiments, we modify CIFAR10 to a binary classification task. We choose class $2,7$ to be the positive class and the others to be negative. We train resnet56 with MSE loss. We set batch size $128$, momentum $0.9$, weight decay $5e-4$ and learning rate $0.1$.

In the experiment in section2.3, we fetch a batch of data (batch size=$128$) from the testset and randomly corrupted the label by noise level $0,0.1,0.2,0.3,0.4,0.5$. We plot the ratio of the norm of the label vector which lies in the subspaces corresponding to top-5 eigenvalues of NTK.

In the experiment in section2.3, we calculate the ratio of the norm of the label vector which lies in the subspaces corresponding to top-5 eigenvalues of NTK firstly. We calculate the ratio of the norm of the label vector provided by the self-distillation algorithm lies in the top-5 eigenspace. We calculate the difference of the latter ratio and the former ratio and called it information gain. We plot the information gain of the first 1500 iterations.

\end{document}